\newtheorem{proposition}{Proposition}
\newtheorem{fact}{Fact}
\newtheorem{definition}{Definition}
\newtheorem{lemma}{Lemma}
\newtheorem{example}{Example}
\begin{document}
%
% paper title
% can use linebreaks \\ within to get better formatting as desired
\title{Online Anomaly Detection Systems \\ Using Incremental Commute Time}

% author names and affiliations
% use a multiple column layout for up to two different
% affiliations

% \author{Submitted for Blind Review}

% conference papers do not typically use \thanks and this command
% is locked out in conference mode. If really needed, such as for
% the acknowledgment of grants, issue a \IEEEoverridecommandlockouts
% after \documentclass

% for over three affiliations, or if they all won't fit within the width
% of the page, use this alternative format:
%
\author{\IEEEauthorblockN{Nguyen Lu Dang Khoa\IEEEauthorrefmark{1}
and Sanjay Chawla\IEEEauthorrefmark{2}}
\IEEEauthorblockA{School of Information Technologies \\ University of Sydney \\
Sydney NSW 2006, Australia \\
\IEEEauthorrefmark{1}khoa@it.usyd.edu.au}
\IEEEauthorrefmark{2}sanjay.chawla@sydney.edu.au
}

% use for special paper notices
%\IEEEspecialpapernotice{(Invited Paper)}

% make the title area
\maketitle

\begin{abstract}
Commute Time Distance (CTD) is a random walk based metric on graphs. CTD has found widespread applications in many domains including personalized search, collaborative filtering and making search engines robust against manipulation. Our interest is inspired by the use of CTD as a metric for anomaly detection. It has been shown that CTD can be used to simultaneously identify both global and local anomalies. Here we propose an accurate and efficient approximation for computing the CTD in an incremental fashion in order to facilitate real-time applications. An online anomaly detection algorithm is designed where the CTD of each new arriving data point to any point in the current graph can be estimated in constant time ensuring a real-time response. Moreover, the proposed approach can also be applied in many other applications that utilize  commute time distance.
\end{abstract}

\begin{IEEEkeywords}
commute time distance; incremental commute time; random walk; anomaly detection;
\end{IEEEkeywords}

% For peer review papers, you can put extra information on the cover
% page as needed:
% \ifCLASSOPTIONpeerreview
% \begin{center} \bfseries EDICS Category: 3-BBND \end{center}
% \fi
%
% For peerreview papers, this IEEEtran command inserts a page break and
% creates the second title. It will be ignored for other modes.
\IEEEpeerreviewmaketitle

\section{Introduction}
Commute Time Distance (CTD) is a random walk based metric on graphs. The $CTD(i,j)$ between two nodes $i$ and $j$ is the {\it expected} number of steps a random walk starting at $i$ will take to reach $j$ for the first time and then return back to $i$. The fact that CTD is averaged over all paths (and not just the shortest path) makes it more robust to data perturbations.

CTD has found widespread applications in personalized search \cite{sarkar2008}, collaborative filtering \cite{brand2005,fouss2007} and making search engines robust against manipulation \cite{hopcroft2007}. Our interest is inspired by the use of CTD as a metric for anomaly detection. It has been shown that CTD can be used to simultaneously identify global, local and even collective anomalies in data \cite{khoa2010}.

More advanced measures generally require more expensive computation. Estimating CTD involves the eigen decomposition of the graph Laplacian matrix and consequently has $O(n^3)$ time complexity which is impractical for large graphs. Saerens, Pirotte and Fouss \cite{saerens2004b} used subspace approximation, and Khoa and Chawla \cite{khoa2010} used graph sampling to reduce the complexity. Sarkar and Moore \cite{sarkar2007} introduced a notion of truncated commute time and a pruning algorithm to find nearest neighbors in commute time. They empirically demonstrated achieving a near-linear running time as a function of graph size. Spielman and Srivastava \cite{spielman2008} have proposed a near-linear time algorithm for approximating pairwise CTD in $O(\log n)$ time based on random projections.

However, there are many applications in practice which require the computation of CTD in an online fashion. When a new data point arrives, the application needs to respond quickly without recomputing everything from scratch. The algorithms noted above all work in a batch fashion and have a high computation cost for online applications.

We are interested in the following scenario: a dataset $D$ is given from an underlying domain of interest. For example, data from a network traffic log or environment or climate change monitoring. A new data point arrives and we want to determine if it is an anomaly with respect to $D$ in CTD. Intuitively a new data point is an anomaly if it is {\it far away from its neighbors in CTD}.

\begin{example}
Consider the two graphs shown in Figure \ref{fig:CTD1}. While the shortest path distance between node 1 and 2
is the same in both graphs, $CTD(1,2)$ increases after node 5 is added. This property of CTD can be used
to great effect to detect both global and local outliers. However, the same property makes it challenging
to calculate the CTD in an incremental manner.

\begin{figure}[h]
  \centering
  \subfloat[A graph of 4 nodes]{\label{fig:exp1}\includegraphics[width=0.15\textwidth]{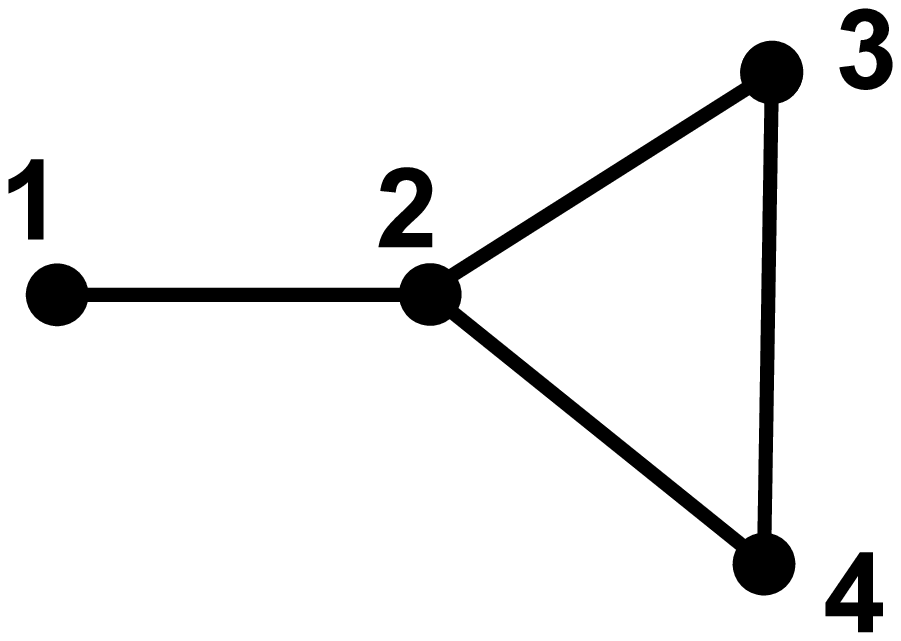}}
  \hspace{1cm}
  \subfloat[Adding node 5]{\label{fig:exp2}\includegraphics[width=0.15\textwidth]{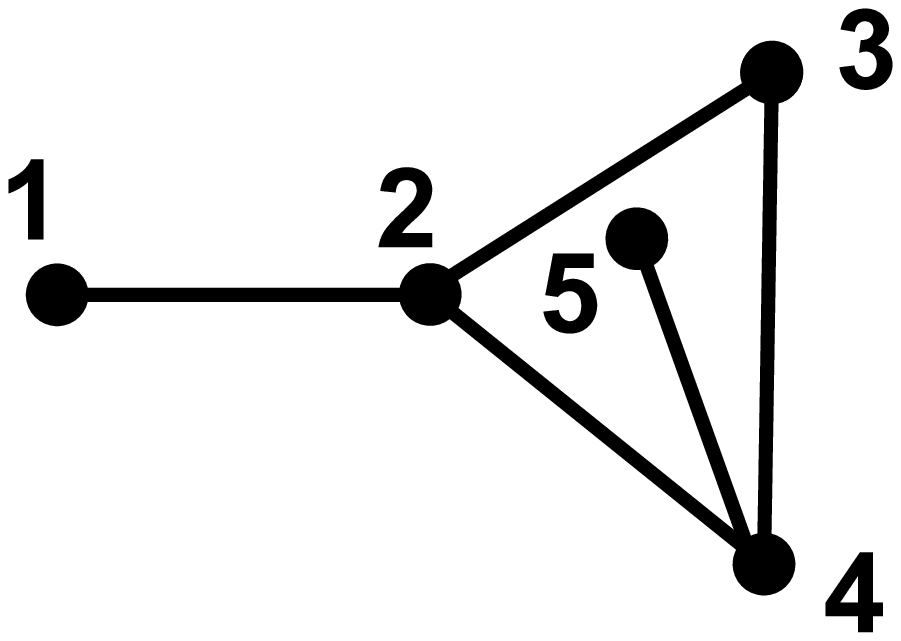}}
  \caption{$CTD(1,2)$ increases after addition of node 5 even though shortest path distance remains
unchanged. This property of CTD has many applications including anomaly detection.}
  \label{fig:CTD1}
\end{figure}
\end{example}

Here we propose and compare two methods for computing CTD in an incremental fashion. The first method is based on an incremental update of the eigen decomposition of a Laplacian matrix. The second method uses the recursive definition of CTD based on hitting time. To the best of our knowledge both these methods are novel and their comparison provides
revealing insights about CTD which are independent of the application domain.

The contributions of this paper are as follows:
\begin{itemize}
    \item We make use of the characteristics of random walk to estimate CTD incrementally in constant time. The same approach could be used for estimating the hitting time incrementally.
    \item We propose a provably fast method to incrementally update the eigenvalues and eigenvectors of the graph Laplacian matrix. This method can be integrated with any technique requiring a graph spectral computation, such as spectral clustering.
    \item We design an online algorithm for anomaly detection using incremental CTD. The technique is verified by experiments in synthetic and real datasets. The experiments show the effectiveness of the proposed methods in terms of accuracy and performance.
\end{itemize}

The remainder of the paper is organized as follows. Section \ref{chapter:CTD} reviews notation and concepts related to random walk and CTD, and a simple example to tie up all the definitions and ideas. In Sections \ref{chapter:incrementalLaplacian} and \ref{chapter:incrementalCD}, we present two methods to incrementally approximate the CTD. In Section \ref{iCTDalgorithm}, we introduce an online anomaly detection algorithm which uses incremental CTD. In Section \ref{chapter:expres}, we evaluate our approach using experiments on synthetic and real datasets. Sections \ref{chapter:related} covers related work. We conclude in Section \ref{chapter:conclusion} with a summary and a direction for future research.

\section{Commute Time Distance}
\label{chapter:CTD}
We provide a self-contained introduction to random walks with an emphasis on CTD. Assume we are given a connected undirected and weighted graph $G=(V,E,W)$.

\begin{definition} Let $i$ be a node in $G$ and $N(i)$ be its neighbors. The {\it degree} $d_{i}$ of a node $i$ is $\sum_{j \in N(i)}w_{ij}$. The {\it volume} $V_{G}$ of the graph is defined as $\sum_{i \in V}d_{i}$.
\end{definition}

\begin{definition} The transition matrix $M =(p_{ij})_{i,j \in V}$ of a random walk on $G$ is given by
\[
p_{ij} = \left\{\begin{array}{ll}\frac{w_{ij}}{d_{i}}, & \mbox{ if $(i,j) \in E$} \\0, & \mbox{ otherwise }
\end{array} \right.
\]
\end{definition}

\begin{definition}
Let $P_{0}$ be an initial distribution on $G$. Define $P_{t} = (M^{T})^{t}P_{0}$ for all $t \geq 0$. A distribution $P_{0}$ is {\it stationary} if $P_{1}=P_{0}$ \cite{lovasz1993}.
\end{definition}

\begin{fact} The distribution $P_{0}$ defined by $\pi(v) = \frac{d_{v}}{V_{G}}$ for all $v \in V$ is a {\it stationary distribution} \cite{lovasz1993}.
\end{fact}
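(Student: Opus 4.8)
The plan is to verify the defining condition of stationarity directly: I would show that the distribution $\pi$ with $\pi(v) = d_{v}/V_{G}$ satisfies $P_{1} = P_{0}$ when we take $P_{0} = \pi$. Since the recursion $P_{t} = (M^{T})^{t}P_{0}$ specialized to $t=1$ gives $P_{1} = M^{T}P_{0}$, the entire task reduces to checking the fixed-point identity $M^{T}\pi = \pi$ coordinate by coordinate.

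First I would expand the $i$-th coordinate of $M^{T}\pi$ using the definition of matrix--vector multiplication and of the transpose:
\[
(M^{T}\pi)(i) = \sum_{j \in V} p_{ji}\,\pi(j).
\]
Next I would substitute the explicit forms $p_{ji} = w_{ji}/d_{j}$, which is nonzero only for $j \in N(i)$, together with $\pi(j) = d_{j}/V_{G}$, so that the sum restricts to the neighbors of $i$ and the degree factor $d_{j}$ cancels:
\[
(M^{T}\pi)(i) = \sum_{j \in N(i)} \frac{w_{ji}}{d_{j}}\cdot\frac{d_{j}}{V_{G}} = \frac{1}{V_{G}}\sum_{j \in N(i)} w_{ji}.
\]

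The decisive step, and the only place where the hypotheses genuinely matter, is invoking that $G$ is \emph{undirected}, so $w_{ji} = w_{ij}$. This lets me recognize the remaining sum as the degree of $i$: by the definition $d_{i} = \sum_{j \in N(i)} w_{ij}$ we obtain $(M^{T}\pi)(i) = d_{i}/V_{G} = \pi(i)$. Since $i$ was arbitrary, $M^{T}\pi = \pi$, i.e. $P_{1} = P_{0}$, which is precisely the definition of $\pi$ being stationary. I would close by recording the sanity check that $\pi$ is a bona fide probability distribution, namely $\sum_{v \in V}\pi(v) = \frac{1}{V_{G}}\sum_{v} d_{v} = V_{G}/V_{G} = 1$.

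I do not anticipate a real obstacle here; the argument is a one-line cancellation. The only point requiring care is bookkeeping with the transpose. Note that $M$ itself is generally \emph{not} symmetric (it would be only if all degrees were equal), so one cannot argue with $M$ in place of $M^{T}$. The cancellation works precisely because $\pi(j)$ carries the factor $d_{j}$ that annihilates the $1/d_{j}$ coming from $p_{ji}$, after which the symmetry of the weights converts the neighbor-sum of incoming weights $w_{ji}$ back into the degree $d_{i}$ defined through the outgoing weights $w_{ij}$.
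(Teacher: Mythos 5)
Your verification is correct and is the standard argument: the paper itself gives no proof, simply citing Lov\'asz, and the computation you carry out ($(M^{\text{T}}\pi)(i)=\sum_{j\in N(i)}\frac{w_{ji}}{d_j}\cdot\frac{d_j}{V_G}=\frac{d_i}{V_G}$, using the symmetry $w_{ij}=w_{ji}$ of the undirected graph) is exactly the one-line cancellation found in that reference. Your attention to the transpose and the closing normalization check are both appropriate; there is nothing to add.
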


\begin{definition} A random walk is {\it time-reversible} if for every pair of nodes $i,j \in V$, $\pi(i)p_{ij} = \pi(j)p_{ji}$ \cite{lovasz1993}.
\end{definition}

\begin{definition} The Hitting Time $h_{ij}$ is the expected number of steps that a random walk starting at $i$ will take before reaching $j$ for the first time.
\end{definition}

\begin{definition}The Hitting Time can be defined in terms of the recursion
\begin{displaymath}
h_{ij} = \left\{\begin{array}{ll}
1 + \sum_{l \in N(i)}p_{il}h_{lj} & \mbox{ if $ i \neq j$} \\
0 & \mbox{otherwise}
\end{array}\right.
\end{displaymath}
\end{definition}

\begin{definition} The Commute Time Distance $c_{ij}$  between two nodes $i$ and $j$ is given by $c_{ij} = h_{ij} + h_{ji}$.
\end{definition}

\begin{fact} CTD is a metric: (i) $c_{ii} = 0$, (ii) $c_{ij} = c_{ji}$ and (iii) $c_{ij} \leq c_{ik} + c_{kj}$ \cite{klein1993}.
\end{fact}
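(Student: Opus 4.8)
The plan is to treat the three metric axioms separately, since the first two are immediate while the triangle inequality carries all the content. Properties (i) and (ii) follow straight from the definitions already given: the hitting-time recursion sets $h_{ii}=0$, so $c_{ii}=h_{ii}+h_{ii}=0$; and symmetry is automatic from $c_{ij}=h_{ij}+h_{ji}=h_{ji}+h_{ij}=c_{ji}$, requiring no appeal to reversibility. I would dispatch these in one line each and concentrate the effort on (iii).

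For the triangle inequality I would pass to the electrical-network interpretation of the random walk, which is available precisely because $G$ is undirected and the walk is time-reversible. View $G$ as a resistor network in which edge $(i,j)$ has conductance $w_{ij}$, i.e. resistance $r_{ij}=1/w_{ij}$, and let $R_{ij}$ denote the effective resistance between $i$ and $j$. The first key step is the classical identity $c_{ij}=V_G\,R_{ij}$ relating commute time to effective resistance; since $V_G$ is a single positive constant for the whole graph, the triangle inequality for $c$ is equivalent to $R_{ij}\le R_{ik}+R_{kj}$ for $R$. Deriving this identity from the hitting-time recursion, via the harmonic (potential-theoretic) solution of that recursion, is the first place where genuine work is needed.

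Granting the identity, I would prove $R_{ij}\le R_{ik}+R_{kj}$ by Thomson's minimum-energy principle: $R_{ab}$ equals the least Dirichlet energy $\mathcal{E}(f)=\sum_e r_e f(e)^2$ among all unit flows $f$ from $a$ to $b$. Let $f^{ik}$ and $f^{kj}$ be the energy-minimizing (harmonic) unit flows from $i$ to $k$ and from $k$ to $j$. Their sum is again a unit flow from $i$ to $j$, because the source and sink at $k$ cancel, so Thomson's principle gives $R_{ij}\le \mathcal{E}(f^{ik}+f^{kj}) = R_{ik}+R_{kj}+2\langle f^{ik},f^{kj}\rangle_r$, where $\langle f,g\rangle_r=\sum_e r_e f(e)g(e)$. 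The inequality then reduces to showing the cross term is non-positive. Using Ohm's law for the harmonic flow $f^{ik}$ together with the discrete integration-by-parts (Tellegen) identity $\langle f^{ik},g\rangle_r=\sum_a \phi_a\,I_a$, where $\phi$ is the potential driving $f^{ik}$ and $I_a$ is the net current $g$ injects at node $a$, the cross term collapses to $\phi_k-\phi_j$.

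This is where I expect the main obstacle: I must argue $\phi_k-\phi_j\le 0$. That follows from the maximum principle for harmonic functions on the network, since in the $i$-to-$k$ problem the potential $\phi$ attains its global minimum at the sink $k$, whence $\phi_k\le\phi_j$ and the cross term is non-positive. Combining, $R_{ij}\le R_{ik}+R_{kj}$ and therefore $c_{ij}\le c_{ik}+c_{kj}$. The two non-routine ingredients are thus the commute-time/effective-resistance identity and the sign of the cross term via the maximum principle; a purely probabilistic route through the cycle-reversal identity $h_{ik}+h_{kj}+h_{ji}=h_{ij}+h_{jk}+h_{ki}$ is also available, but in my view it is harder to push to the inequality than the electrical argument.
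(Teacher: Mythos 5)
Your argument is correct, but note that the paper offers no proof of this fact at all: it is stated as a known result with a citation to Klein and Randi\'c, so there is nothing internal to compare against. Your handling of (i) and (ii) is the right one-line reading of the definitions. For (iii), your electrical route (commute time equals $V_G$ times effective resistance, then Thomson's principle applied to the superposed unit flows $f^{ik}+f^{kj}$, with the cross term $\langle f^{ik},f^{kj}\rangle_r=\phi_k-\phi_j\le 0$ by the maximum principle at the sink) is a complete and standard proof of the resistance-distance metric property. Two remarks. First, you need not ``derive the identity from the hitting-time recursion'': the paper's own Fact $c_{ij}=V_G(e_i-e_j)^{\text{T}}L^{+}(e_i-e_j)$ already exhibits $c_{ij}/V_G$ as the effective resistance, so that ingredient is free. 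Second, your closing judgment that the probabilistic route is harder is backwards: by the strong Markov property a walk from $i$ reaches $j$ no later in expectation than by first hitting $k$, so $h_{ij}\le h_{ik}+h_{kj}$ and $h_{ji}\le h_{jk}+h_{ki}$, and adding these two inequalities gives $c_{ij}\le c_{ik}+c_{kj}$ directly, with no need for the cycle-reversal identity, reversibility, or any electrical machinery. That two-line argument is the more elementary proof; your electrical one buys the stronger fact that effective resistance itself is a metric, which is useful elsewhere but not needed here.
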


Remarkably, CTD can be expressed in terms of the Laplacian of $G$.
\begin{definition}Let $D$ be the diagonal degree matrix and $A$ be the adjacency matrix of $G$. The Laplacian of $G$ is the matrix $L = D - A$.
\end{definition}

\begin{fact}
\begin{enumerate}
\item
Let $e_{i}$ be the $V$ dimensional column vector with a 1 at location $i$ and zero elsewhere.
\item Let $(\lambda_{i}, v_{i})$ be the eigenpair of $L$ for all $i \in V$, i.e., $Lv_{i} = \lambda_{i}v_{i}$.
\item It is well known that $\lambda_{1}=0, v_{1} =(1,1,\ldots,1)^{\text{T}}$ and all $\lambda_{i} \geq 0$.
\item Assume $0 = \lambda_{1} \leq \lambda_{2} \ldots \leq \lambda_{|V|}$.
\item Then the pseudo-inverse of $L$ denoted by $L^{+}$ is
\[
L^{+} = \sum_{i=2}^{|V|}\frac{1}{\lambda_{i}}v_{i}v_{i}^{\text{T}}
\]
\end{enumerate}

\end{fact}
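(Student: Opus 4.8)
The plan is to derive the stated formula from the spectral theorem together with the four defining conditions of the Moore--Penrose pseudo-inverse. First I would observe that $L = D - A$ is real and symmetric, so by the spectral theorem it admits an orthonormal eigenbasis $v_{1},\ldots,v_{|V|}$, which I take to be normalised so that $v_{i}^{\text{T}}v_{j} = \delta_{ij}$. Expressing $L$ in this basis gives the decomposition $L = \sum_{i=1}^{|V|}\lambda_{i}v_{i}v_{i}^{\text{T}}$, and since $\lambda_{1}=0$ the first term vanishes, leaving $L = \sum_{i=2}^{|V|}\lambda_{i}v_{i}v_{i}^{\text{T}}$. The proposed $L^{+} = \sum_{i=2}^{|V|}\frac{1}{\lambda_{i}}v_{i}v_{i}^{\text{T}}$ is then the natural candidate obtained by inverting the nonzero eigenvalues while acting as zero on the kernel spanned by $v_{1}$.

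Next I would verify that this candidate satisfies the Penrose conditions, which characterise the pseudo-inverse uniquely. The central computation is the product $LL^{+}$: collapsing the cross terms via $v_{i}^{\text{T}}v_{j} = \delta_{ij}$ yields $LL^{+} = \sum_{i=2}^{|V|}v_{i}v_{i}^{\text{T}}$, and by completeness of the orthonormal basis this is exactly the orthogonal projection $I - v_{1}v_{1}^{\text{T}}$ onto the range of $L$. The same calculation shows $L^{+}L$ equals the same projection, so both products are symmetric, which immediately settles the two symmetry conditions. For the remaining two I would use that $v_{1}$ spans the kernel: since $Lv_{1} = \lambda_{1}v_{1} = 0$ and hence $v_{1}^{\text{T}}L = 0$, multiplying the projection by $L$ on either side restores $L$, giving $LL^{+}L = L$; likewise $L^{+}v_{1}=0$ gives $L^{+}LL^{+} = L^{+}$.

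The one genuine subtlety --- the main thing to get right rather than a deep obstacle --- is the treatment of the kernel and the normalisation of the eigenvectors. The formula presupposes that the $v_{i}$ are orthonormal, yet the statement records $v_{1} = (1,\ldots,1)^{\text{T}}$, which is not a unit vector; I would therefore work with the normalised $v_{1}/\sqrt{|V|}$ throughout, observing that this changes nothing in the expression for $L^{+}$ since $v_{1}$ does not appear there. Connectedness of $G$ is what guarantees that $\lambda_{1}=0$ is a \emph{simple} eigenvalue, so the kernel is exactly one-dimensional and the range projection is precisely $I - v_{1}v_{1}^{\text{T}}$; were $G$ disconnected, the sum would instead have to begin past the full block of zero eigenvalues. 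Everything beyond this is routine index manipulation.
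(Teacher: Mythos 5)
Your proof is correct. The paper does not actually prove this statement --- it is listed as a background \emph{Fact} and implicitly deferred to the standard literature --- so there is no in-paper argument to compare against. Your route (spectral theorem for the real symmetric $L$, then verification of the four Penrose conditions, with the key observation that $LL^{+} = L^{+}L = I - v_{1}v_{1}^{\text{T}}$ is the projection onto $\mathrm{range}(L)$) is the standard one, and you correctly handle the two points that are easy to fumble: the normalisation of $v_{1}=(1,\ldots,1)^{\text{T}}$ and the role of connectedness of $G$ (which the paper does assume) in ensuring the zero eigenvalue is simple so that the sum may start at $i=2$.
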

\begin{fact}
\begin{equation}
\label{equa3}
c_{ij} = V_{G}(l_{ii}^{+}+l_{jj}^{+}-2l_{ij}^{+}) = V_{G}(e_{i}-e_{j})^{\text{T}}L^{+}(e_{i}-e_{j})
\end{equation}
where $l_{ij}^{+}$ is the $(i,j)$ element of $L^{+}$ \cite{fouss2007}.
\end{fact}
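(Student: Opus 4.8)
The plan is to reduce the two defining properties of hitting times --- the recursion and the boundary condition $h_{jj}=0$ --- to a single linear system governed by the Laplacian, and then to invert that system using $L^{+}$. First I would fix the target node $j$ and collect the hitting times into the column vector $h^{(j)} = (h_{1j},\ldots,h_{|V|j})^{\mathrm{T}}$. The recursion $h_{ij} = 1 + \sum_{l} p_{il}h_{lj}$, which holds for $i\neq j$ (and in which the $l=j$ term vanishes because $h_{jj}=0$), reads in matrix form as $(I-M)h^{(j)} = \mathbf{1}$ on every row except row $j$. Writing the transition matrix as $M = D^{-1}A$ gives $I-M = D^{-1}(D-A) = D^{-1}L$, so multiplying through by $D$ turns the off-diagonal rows into $(Lh^{(j)})_i = d_i$ for all $i\neq j$.

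The delicate step is supplying the one equation the recursion does not give, namely the value of the $j$-th entry $(Lh^{(j)})_j$, since row $j$ has been replaced by the constraint $h_{jj}=0$. Here I would exploit the structure of $L$: it is symmetric (the graph is undirected) with $L\mathbf{1}=0$, so its range is orthogonal to $\mathrm{span}\{\mathbf{1}\}=\mathrm{span}\{v_{1}\}$, and hence any vector of the form $Lh^{(j)}$ must satisfy $\mathbf{1}^{\mathrm{T}}(Lh^{(j)})=0$. Combining this orthogonality with $(Lh^{(j)})_i=d_i$ for $i\neq j$ pins the remaining entry down as $(Lh^{(j)})_j = d_j - V_G$, where $V_G=\sum_i d_i$ is the volume. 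The whole system therefore collapses to the clean identity $Lh^{(j)} = \mathbf{d} - V_G e_j$, with $\mathbf{d}=(d_1,\ldots,d_{|V|})^{\mathrm{T}}$ the degree vector, whose right-hand side lies automatically in the range of $L$.

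I would then apply the pseudo-inverse to obtain $h^{(j)} = L^{+}(\mathbf{d}-V_G e_j) + \alpha\mathbf{1}$, where the free multiple of $\mathbf{1}$ (the kernel component) is fixed by the boundary condition $h_{jj}=e_j^{\mathrm{T}}h^{(j)}=0$, giving $\alpha = -e_j^{\mathrm{T}}L^{+}(\mathbf{d}-V_G e_j)$. Substituting back yields $h_{ij} = (e_i-e_j)^{\mathrm{T}}L^{+}(\mathbf{d}-V_G e_j)$. Forming the commute time $c_{ij}=h_{ij}+h_{ji}$ is where everything cancels: the two $L^{+}\mathbf{d}$ contributions carry opposite prefactors $(e_i-e_j)$ and $(e_j-e_i)$ and sum to zero, while the remaining terms combine into $V_G(e_i-e_j)^{\mathrm{T}}L^{+}(e_i-e_j)$. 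Expanding this quadratic form and using the symmetry $l_{ij}^{+}=l_{ji}^{+}$ of the pseudo-inverse produces $V_G(l_{ii}^{+}+l_{jj}^{+}-2l_{ij}^{+})$, which is exactly the claimed identity.

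I expect the main obstacle to be the bookkeeping around the boundary row $j$: the recursion constrains only the off-diagonal rows, and one must argue carefully that the Fredholm consistency condition supplies precisely the value $d_j - V_G$ for the missing entry, rather than leaving the system underdetermined. Once the reduction to $Lh^{(j)} = \mathbf{d} - V_G e_j$ is secured, the remainder is a routine pseudo-inverse computation together with the symmetric cancellation in $h_{ij}+h_{ji}$.
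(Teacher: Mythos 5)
Your derivation is correct and complete. Note that the paper itself offers no proof of this fact; it is simply cited from Fouss et al., so there is no internal argument to compare against. Your reduction of the hitting-time recursion to the linear system $Lh^{(j)} = \mathbf{d} - V_G e_j$, the use of the orthogonality $\mathbf{1}^{\mathrm{T}}Lh^{(j)}=0$ to recover the missing row $j$, the resolution of the kernel ambiguity via $h_{jj}=0$, and the cancellation of the $L^{+}\mathbf{d}$ terms in $h_{ij}+h_{ji}$ are all sound; this is essentially the standard derivation found in the cited reference. The one step you flag as delicate is indeed the right one to worry about, but your handling is adequate: the off-diagonal rows together with $h_{jj}=0$ form a nonsingular system (deleting row and column $j$ from $L$ leaves a positive definite matrix for a connected graph), so the true hitting-time vector is the unique solution and must coincide with the expression $L^{+}(\mathbf{d}-V_G e_j)+\alpha\mathbf{1}$ you construct. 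The only implicit hypothesis worth stating explicitly is connectivity of $G$, which the paper assumes throughout and which is what guarantees both that $\ker L = \mathrm{span}\{\mathbf{1}\}$ and that the right-hand side $\mathbf{d}-V_G e_j$ lies in the range of $L$.
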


\vspace{5pt}
{\bf Example:} Again, consider the graph $G$ shown in Figure \ref{fig:exp1} where all the edge weights equal to 1. The sum of the degree of nodes, $V_{G}=8$. We will calculate the commute time $c_{12}$ in two different ways:

\begin{enumerate}
\item  Using random walk: note that the expected number of steps for a random walk starting at node 1 and returning back to it is $\frac{V_{G}}{d_{1}} = \frac{8}{1} = 8$ \cite{lovasz1993}. But the walk from node 1 can only go to node 2 and
then return from node 2 to 1. Thus $c_{12} = 8$.
\item Using algebraic approach: the Laplacian matrix is
\[
L= \left(
\begin{array}{rrrr}
1 & -1 & 0 & 0 \\
-1 & 3 & -1 & -1 \\
0 & -1 & 2 & -1 \\
0 & -1 & -1 & 2
\end{array} \right)
\]
and the pseudo-inverse is
\[
L^{+} = \left(
\begin{array}{rrrr}
0.69 & -0.06 & -0.31 & -0.31 \\
-0.06 & 0.19 & -0.06 & -0.06 \\
-0.31 & -0.06 & 0.35 & 0.02 \\
-0.31 & -0.06 & 0.02 & 0.35  \\
\end{array} \right)
\]
Now $c_{12} = V_{G}(e_{1}-e_{2})^{\text{T}}L^{+}(e_{1}-e_{2})$ and \\
\begingroup
\everymath{\scriptstyle}
\tiny
\[
\left(\begin{array}{cccc} 1 & -1 & 0 & 0 \end{array}\right)
\left(\begin{array}{rrrr}
0.69 & -0.06 & -0.31 & -0.31 \\-0.06 & 0.19 & -0.06 & -0.06 \\
-0.31 & -0.06 & 0.35 & 0.02 \\-0.31 & -0.06 & 0.02 & 0.35  \end{array} \right)
\left(\begin{array}{r} 1 \\  -1 \\ 0 \\ 0 \\ \end{array} \right)
= 1
\]
\endgroup
\end{enumerate}
Thus $c_{12} = V_{G}\times 1 =8$.

Suppose we add a new node (labeled 5) to node 4 with a unit weight  as in Figure \ref{fig:exp2}. Then $c_{12}^{new}=V_{G}^{new}/d_{1}=10/1=10.$

The example in Figure \ref{fig:exp2} shows that by adding an edge, i.e. making the ``cluster'' which contains node 2 denser, $c_{12}$ increases. This shows that CTD between two nodes captures not only the distance between them (as measured by the edge weights) but also their neighborhood densities. For the proof of this claim, see \cite{khoa2010}. This property of CTD has been used to simultaneously discover global and local anomalies in data - an important problem in the anomaly detection literature.

In the above example, we exploited the specific topology (degree one node) of the graph to calculate CTD efficiently. This can only work for very specific instances. The general, more widely used  but slower approach for computing CTD is to use the Laplacian formula.  A key contribution of this paper is that for incremental computation of CTD we can use insights from this example to accurately and efficiently compute the CTD in much more general situations.

\section{Incremental Eigen Decomposition of Graph Laplacian}
\label{chapter:incrementalLaplacian}
In this section, we propose a method to incrementally update the eigensystem (eigenvalues and eigenvectors) of the Laplacian when a new node along with edges to its neighbors is added to the underlying graph. The unique feature of our approach as opposed to that of Ning et. al. \cite{ning2007} are (i) our emphasis is on handling the addition of a new node and the corresponding edges to its nearest neighbors as opposed to just weight updates on existing edges and (ii) simultaneous updating of all weight edges as opposed to one edge at a time.

\subsection{Iterative incremental update of the Laplacian eigensystem}
We propose an algorithm based on the following proposition to incrementally update the eigensystem $(\lambda, v)$ of the Laplacian $L$ when a new node $i$ is added to the graph. Suppose there are $k$ edges $e = (i,j) \in E_{n}$ with weight $w_{e}$ added to the graph from $i$. Denote $\Delta L$ and ($\Delta \lambda, \Delta v$) be changes of $L$ and $(\lambda, v)$ resulting from the addition of $i$. Note that the size of matrix $L$ and its eigenvector $v$ change as mentioned in the Appendix.

\begin{proposition}
\label{prop2}
The solution of the eigensystem
\[
(L + \Delta L)(v + \Delta v) = (\lambda + \Delta \lambda)(v + \Delta v)
\]
can be derived from the solution of the following set of simultaneous equations.
\begin{equation}
\label{equa7}
\Delta\lambda = % \frac{v^{\text{T}}\Delta L(v + \Delta v)}{1+v^{\text{T}}\Delta v}
                \frac{\sum_{e \in E_{n}}w_{e}[v(i)-v(j)][v(i)-v(j)+\Delta v(i)-\Delta v(j)]}{1 + v^{\text{T}}\Delta v}
\end{equation}
\begin{equation}
\label{equa10}
\Delta v = K^{-1}h
\end{equation}
where
\begin{equation}
\label{equa8}
K = L + \Delta L - (\lambda + \Delta \lambda)I,
\end{equation}
and
\begin{equation}
\label{equa9}
h = (\Delta \lambda I - \Delta L)v.
\end{equation}
\end{proposition}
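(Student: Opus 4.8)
The plan is to treat the stated equations not as something to be solved in closed form but as exact algebraic consequences of the perturbed eigenvalue equation $(L+\Delta L)(v+\Delta v) = (\lambda+\Delta\lambda)(v+\Delta v)$, together with a normalization of the eigenvector. First I would expand both sides and cancel the zeroth-order balance $Lv = \lambda v$, which holds because $(\lambda,v)$ is an eigenpair of $L$. What remains is
\[
(L-\lambda I)\Delta v + (\Delta L - \Delta\lambda I)\Delta v + (\Delta L - \Delta\lambda I)v = 0 .
\]
Collecting every term that multiplies $\Delta v$ on the left and moving the rest to the right gives $\bigl(L + \Delta L - (\lambda+\Delta\lambda)I\bigr)\Delta v = (\Delta\lambda I - \Delta L)v$, which is exactly $K\Delta v = h$ with $K$ and $h$ as in (\ref{equa8}) and (\ref{equa9}); inverting $K$ then yields (\ref{equa10}). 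No term is discarded in this step, so the relation for $\Delta v$ is exact.

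For the eigenvalue update I would left-multiply the same reduced equation by $v^{\text{T}}$. Because $L$ is symmetric, $v^{\text{T}}L = (Lv)^{\text{T}} = \lambda v^{\text{T}}$, so the term $v^{\text{T}}L\Delta v$ cancels against $\lambda v^{\text{T}}\Delta v$, leaving $v^{\text{T}}\Delta L(v+\Delta v) = \Delta\lambda\, v^{\text{T}}(v+\Delta v)$. Solving for $\Delta\lambda$ and normalizing $v$ so that $v^{\text{T}}v = 1$ turns the denominator into $1 + v^{\text{T}}\Delta v$. The final step is to make the numerator explicit: since adding node $i$ with its incident edges changes the Laplacian by the rank-one sum $\Delta L = \sum_{e=(i,j)\in E_{n}} w_{e}(e_{i}-e_{j})(e_{i}-e_{j})^{\text{T}}$, the quadratic form collapses to $v^{\text{T}}\Delta L\,u = \sum_{e} w_{e}[v(i)-v(j)][u(i)-u(j)]$ for any $u$, and taking $u = v+\Delta v$ reproduces (\ref{equa7}).

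The hard part is not the algebra but the consistency of the resulting system. Equations (\ref{equa7}) and (\ref{equa10}) are coupled --- $\Delta\lambda$ feeds into $K$ while $\Delta v$ feeds into the numerator of $\Delta\lambda$ --- which is why they must be read as simultaneous relations rather than a formula, and in the algorithm solved by iteration. The delicate point I would flag is that $K$ is formally singular at the exact solution, since $K(v+\Delta v)=0$ by the very eigenvalue equation we started from; the inverse in (\ref{equa10}) is therefore meaningful only when $\lambda+\Delta\lambda$ is a current (not yet converged) estimate, which makes the update essentially a step of inverse iteration. I would also note that $v$ and $\Delta v$ live in the enlarged space after node $i$ is appended, so the dimension bookkeeping from the Appendix must be invoked before any of these products are well defined.
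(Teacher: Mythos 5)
Your proposal is correct and follows essentially the same route as the paper's proof: cancel the zeroth-order relation $Lv=\lambda v$, left-multiply the remainder by $v^{\text{T}}$ and use symmetry of $L$ to isolate $\Delta\lambda$ with the normalization $v^{\text{T}}v=1$, expand $\Delta L=\sum_{e\in E_{n}}w_{e}u_{e}u_{e}^{\text{T}}$ to obtain the edge-sum numerator, and rearrange the same identity into $K\Delta v=h$. Your added remarks on the formal singularity of $K$ at the exact solution and on the dimension bookkeeping are sensible observations beyond what the paper states, but they do not alter the argument.
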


For the proof, see Appendix.

Note that in general it is not practical to solve the system $\Delta v  = K^{-1}h$ at the arrival of each new data point $i$. In practice, as noted by Ning. et. al. \cite{ning2007},  we can set $\Delta v(k) =0$ for all components which are not $i$, its first or second order neighbors.

Denote $N_{i}=\{j|d(i,j) \leq 2\}$, where $d(i,j)$ is the shortest path between $i$ and $j$. Let $K_N$ be the  matrix derived from $K$ after removing columns which do not correspond to nodes in $N_{i}$, $v_N$ and $\Delta v_N$ be the vectors derived from $v$ and $\Delta v$ after removing elements which do not correspond to nodes in $N_{i}$. Since $K_N$ is not a square matrix, we obtain:
\begin{equation}
\label{equa11}
\Delta v = (K_N^{\text{T}}K_N)^{-1}K_N^{\text{T}}h,
\end{equation}
\begin{equation}
\label{equa4}
\Delta\lambda = \frac{\sum_{e \in E_{n}}w_{e}[v(i)-v(j)][v(i)-v(j)+\Delta v(i)-\Delta v(j)]}{1 + v_N^{\text{T}}\Delta v_N}
\end{equation}

Since $\Delta \lambda$ in Equation \ref{equa4} depends on the value of $\Delta v$ in Equation \ref{equa11} and vice versa, we can update the values of $\Delta \lambda$ and $\Delta v$ as follows. We initialize the values $\Delta v = 0$ to update the value of $\Delta \lambda$ and then using that to update $\Delta v$. The procedure is repeated until convergence.  Algorithm \ref{iterativealgorithm}
gives the details.

\begin{algorithm}[h!]
\caption{Incremental update eigenvalues and eigenvectors of Laplacian matrix $L$}
\label{iterativealgorithm}
\textbf{Input:} Laplacian matrix \emph{L}, its eigenvalues \emph{S} and eigenvectors \emph{V}, weights \emph{$w_{e}$} of all the new edges \\
\textbf{Output:} New eigenvalues \emph{$S_n$} and eigenvectors \emph{$V_n$} \\
\begin{algorithmic}[1]
\FOR {each eigenvalue and eigenvector}
\STATE Set $\Delta v=0$\\
\STATE Update $\Delta \lambda$ using Equation \ref{equa4} \\
\STATE Update $\Delta v$ using Equation \ref{equa11} \\
\STATE Repeat steps 3 and 4 until there is no significant change in $\Delta \lambda$ or until the loop reaches a maximum iterations \\
\STATE $v_n = v + \Delta v$, $\lambda_n = \lambda + \Delta \lambda$
\ENDFOR
\end{algorithmic}
\end{algorithm}

\section{Incremental Estimation of Commute Time Distance}
\label{chapter:incrementalCD}
In this section, we derive a new method for computing the CTD in an incremental fashion. This method uses the definition of CTD based on the hitting time. The basic intuition is to expand the hitting time recursion until the random walk has moved a few steps away from the new node and then use the {\it old} values. In
Section \ref{chapter:expres} we will show that this method results in remarkable agreement between the batch and online mode.

We deal with two cases shown in Figure \ref{fig:rank}.

\begin{figure}[h]
  \centering
  \subfloat[Rank 1]{\label{fig:rank1}\includegraphics[width=0.2\textwidth]{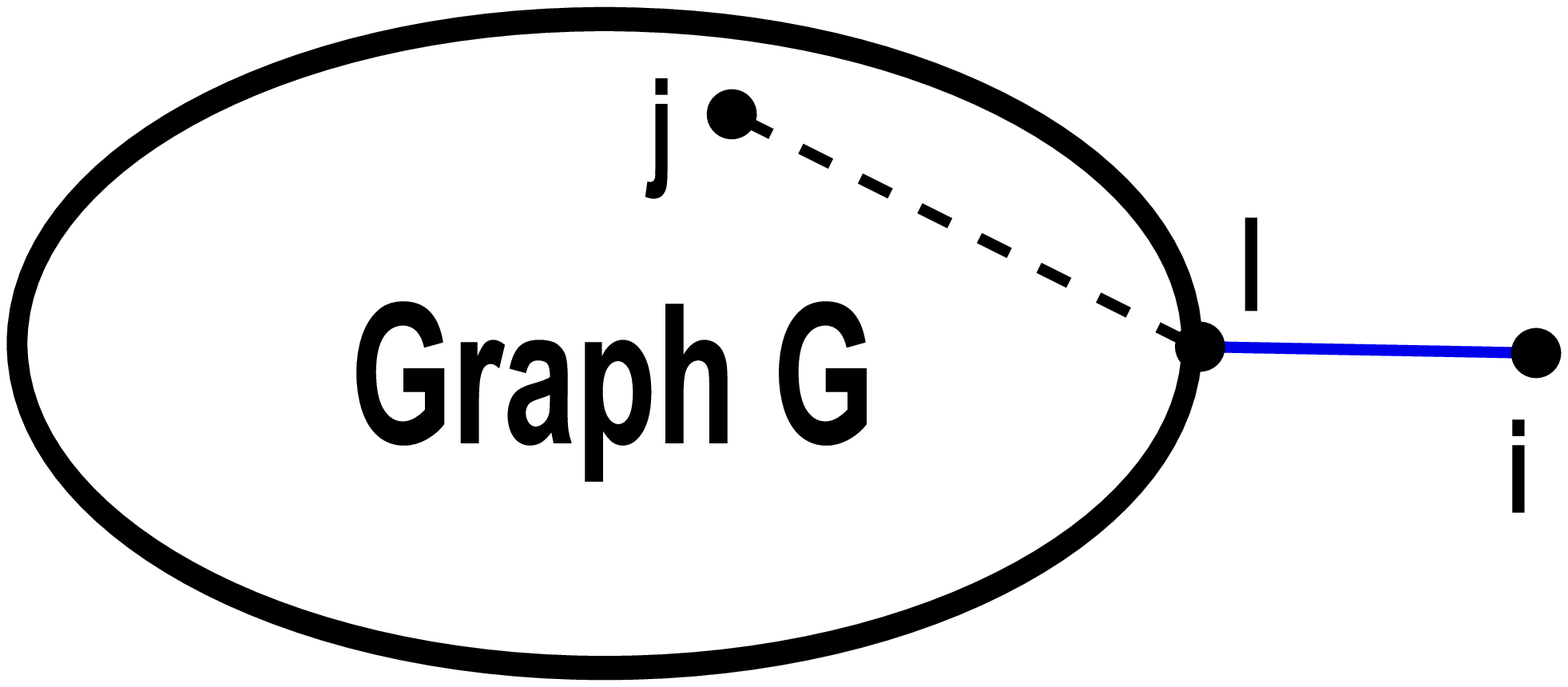}}
\hspace{+5pt}
  \subfloat[Rank k]{\label{fig:rankk}\includegraphics[width=0.2\textwidth]{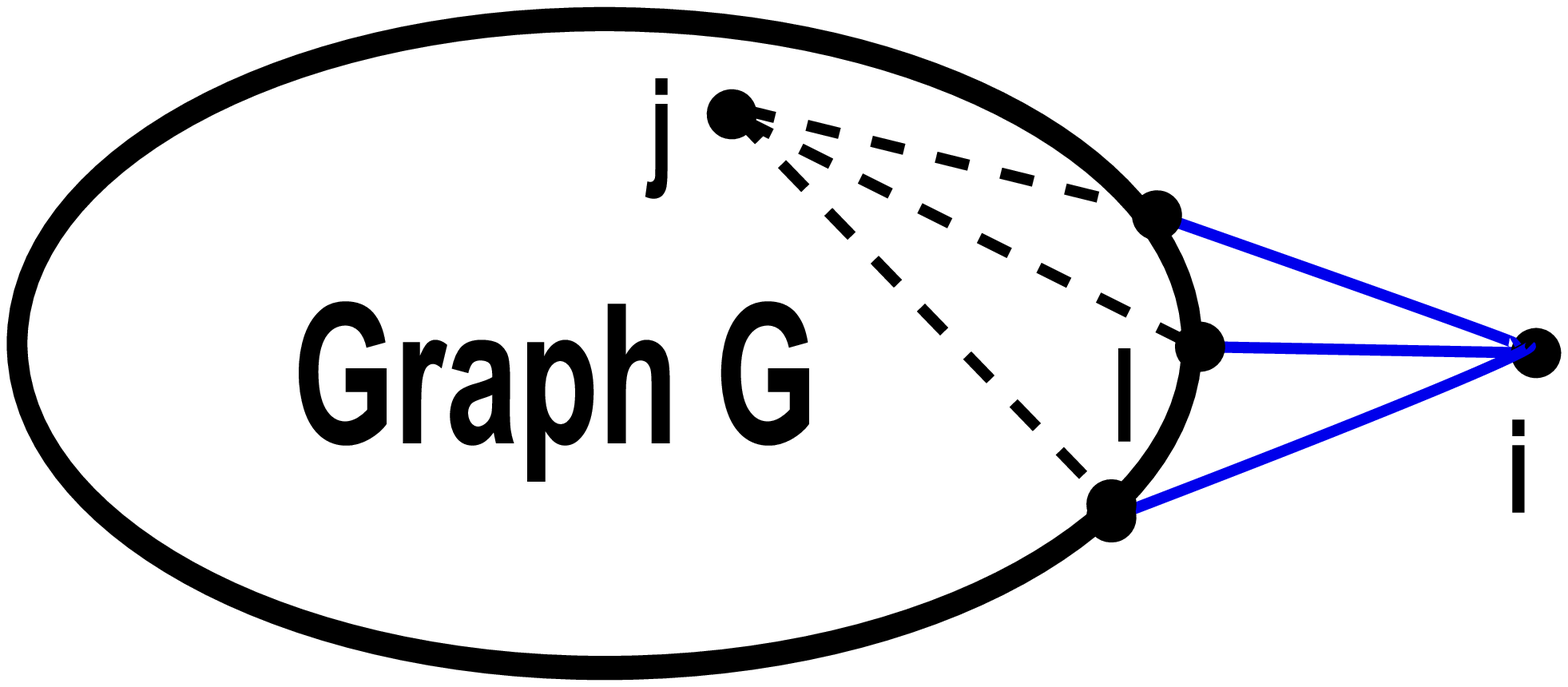}}
  \caption{Rank 1 and rank k perturbation}
  \label{fig:rank}
\end{figure}

\begin{enumerate}
\item Rank one perturbation corresponds to the situation when a new node connects with one other node in the existing graph.
\item Rank $k$ perturbation deals with the situation when the new node has $k$ neighbors in the existing graph.
\end{enumerate}
The term {\it rank} is used as it corresponds to the rank of the perturbation matrix $\Delta L$.

\subsection{Rank one perturbation}
\begin{proposition}
\label{prop3}
Let $i$ be a new node connected by one edge to an existing node $l$ in the graph $G$. Let $w_{il}$ be the weight of the new edge. Let $j$ be an arbitrary node in the graph $G$. Then
\begin{equation}
\label{equa15}
c_{ij}\approx c_{lj}^{old} + \frac{V_G}{w_{il}}
\end{equation}
where `old' represents the CTD in graph $G$ before adding $i$.
\end{proposition}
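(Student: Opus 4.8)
The plan is to start from the definition $c_{ij}=h_{ij}+h_{ji}$ and exploit the fact that the new node $i$ is a \emph{leaf}: its only neighbour is $l$, so $l$ is a cut vertex that every walk entering or leaving $i$ must cross. First I would apply the hitting-time recursion at $i$. Because $N(i)=\{l\}$ and $p_{il}=1$, the recursion collapses to $h_{ij}=1+h_{lj}$ and, in particular, $h_{il}=1$. For the reverse direction, since every path from $j$ to $i$ passes through $l$, a first-passage (strong Markov) decomposition at the first hitting time of $l$ gives $h_{ji}=h_{jl}+h_{li}$. Adding these identities yields the exact relation $c_{ij}=(h_{lj}+h_{jl})+(h_{il}+h_{li})=c_{lj}+c_{il}$, with all quantities computed in the \emph{new} graph. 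The same identity can be read off instantly from the resistance form of CTD (Equation~\ref{equa3}): a pendant edge puts $i$ and $l$ in series, so $R_{ij}=R_{il}+R_{lj}$ and hence $c_{ij}=c_{il}+c_{lj}$.

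Next I would evaluate the two terms separately. For $c_{il}$ I reuse the return-time fact already invoked in the Example of Section~\ref{chapter:CTD}: the expected time for a walk started at $i$ to return to $i$ is $V_{G}^{new}/d_{i}$. Since $i$ is a leaf with $d_{i}=w_{il}$, this return trip is exactly a step $i\to l$ followed by a first passage $l\to i$, so the return time equals $h_{il}+h_{li}=c_{il}$; hence $c_{il}=V_{G}^{new}/w_{il}$ \emph{exactly}. This already reproduces the $V_{G}/w_{il}$ term of the statement (with $V_G$ read as the volume of the enlarged graph). It then only remains to replace $c_{lj}$ in the new graph by the stored old value $c_{lj}^{old}$, which is precisely where the approximation sign enters.

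The main obstacle is therefore controlling the error in $c_{lj}^{new}\approx c_{lj}^{old}$. I would argue this in one of two equivalent ways. Using the effective-resistance picture, attaching a dead-end branch carries no current between two existing nodes, so $R_{lj}$ is unchanged; thus $c_{lj}^{new}=V_{G}^{new}R_{lj}$ and $c_{lj}^{old}=V_{G}^{old}R_{lj}$ differ only by the factor $V_{G}^{new}/V_{G}^{old}=1+2w_{il}/V_{G}^{old}$, which is close to $1$ whenever the added weight is small relative to the graph volume. Alternatively, following the hitting-time intuition advertised for this section, I would expand $h_{lj}^{new}$ and $h_{jl}^{new}$ by the recursion and observe that from $l$ the walk steps into the dead end $i$ only with probability $w_{il}/d_{l}^{new}$ and returns immediately, so the induced corrections are first order in $w_{il}$. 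Either route shows the neglected term is $O(w_{il})$, justifying the approximation and completing the estimate $c_{ij}\approx c_{lj}^{old}+V_{G}/w_{il}$.
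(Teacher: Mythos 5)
Your proof is correct, and it follows the same skeleton as the paper's: decompose $c_{ij}=c_{il}+c_{lj}$, evaluate $c_{il}$ exactly via the return-time identity $V_G^{new}/d_i$, and then argue $c_{lj}^{new}\approx c_{lj}^{old}$. Where you genuinely diverge is in how that last, central approximation is justified. The paper expands the hitting-time recursion at $l$, assumes $h_{qj}\approx h_{qj}^{old}$ for the other neighbours $q$ of $l$ (itself an unjustified approximation), derives the additive correction $h_{lj}=h_{lj}^{old}+\tfrac{2p_{li}}{1-p_{li}}$, and discards it because $p_{li}\ll 1$. Your primary route instead uses the exact identity $c_{uv}=V_G R_{uv}$ together with the observation that a pendant branch carries no current, so $R_{lj}$ is literally unchanged and $c_{lj}^{new}=\bigl(1+2w_{il}/V_G^{old}\bigr)c_{lj}^{old}$ \emph{exactly}. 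This buys you two things the paper's sketch does not have: the error in the key step is identified precisely (a multiplicative factor $1+2w_{il}/V_G$, negligible iff $w_{il}\ll V_G$, which is a cleaner and more checkable condition than $p_{li}\ll 1$), and you avoid the circularity of approximating old hitting times by new ones midway through the derivation. Your exactness claims ($h_{ij}=1+h_{lj}$, $h_{ji}=h_{jl}+h_{li}$ via the cut vertex, $c_{il}=V_G^{new}/w_{il}$) all check out. The only caveat is cosmetic: the resistance argument does not generalize to the rank-$k$ case (Proposition \ref{prop4}), where the paper's hitting-time machinery is reused, so the paper's choice of route is motivated by uniformity rather than rigour.
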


\begin{proof} (Sketch)
Since the random walk needs to pass $l$ before reaching $j$, the commute distance from $i$ to $j$ is:
\begin{equation}
\label{equa12}
c_{ij}=c_{il} + c_{lj}.
\end{equation}
It is known that:
\begin{equation}
\label{equa13}
c_{il}=\frac{(V_G + 2w_{il})}{w_{il}}
\end{equation}
where $V_G$ is volume of graph $G$ \cite{khoa2010}. We also know $c_{lj}=h_{jl} + h_{lj}$ and $h_{jl}=h_{jl}^{old}$. The only unknown factor is $h_{lj}$. By definition:
\begin{displaymath}
h_{lj}  = 1 + \sum_{q \in N(l)}p_{lq}h_{qj}
        = 1 + \sum_{q \in N(l), q\neq i}p_{lq}h_{qj} + p_{li}h_{ij}. \\
\end{displaymath}

Since $h_{qj} \approx h_{qj}^{old}$, $p_{lq}=(1-p_{li})p_{lq}^{old}$, and $h_{ij}=1+h_{lj}$,
\begin{displaymath}
\begin{split}
h_{lj}  &\approx 1 + \sum_{q \in N(l), q\neq i}(1-p_{li})p_{lq}^{old}h_{qj}^{old} + p_{li}(1+h_{lj}) \\
        &= 1 + (1-p_{li})\sum_{q \in N(l), q\neq i}p_{lq}^{old}h_{qj}^{old} + p_{li}(1+h_{lj}) \\
        &= 1 + (1-p_{li})(h_{lj}^{old}-1) + p_{li}(1+h_{lj}).
\end{split}
\end{displaymath}

After simplification, $h_{lj} = h_{lj}^{old} + \frac{2p_{li}}{1-p_{li}}.$

Then $c_{lj} \approx h_{jl}^{old} + h_{lj}^{old} + \frac{2p_{li}}{1-p_{li}}.$

Since there is only one edge connecting from $i$ to $G$, $i$ is likely an isolated point and thus $p_{li}\ll 1$. Then
\begin{equation}
\label{equa14}
c_{lj}\approx h_{jl}^{old} + h_{lj}^{old} = c_{lj}^{old}.
\end{equation}
As a result from equations \ref{equa12}, \ref{equa13}, and \ref{equa14}:
\begin{displaymath}
c_{ij}\approx \frac{(V_G + 2w_{il})}{w_{il}} + c_{lj}^{old} \approx c_{lj}^{old} + \frac{V_G}{w_{il}}
\end{displaymath}
\end{proof}

\subsection{Rank $k$ perturbation}
The rank $k$ perturbation analysis is more involved but the final formulation is an extension of the rank one perturbation.
\begin{proposition}
\label{prop4}
Denote $l\in G$ be one of $k$ neighbors of $i$ and $j$ be a node in $G$. The approximate commute distance between nodes $i$ and $j$ is:
\begin{equation}
\label{equa19}
c_{ij} \approx \sum_{l \in N(i)}p_{il}c_{lj}^{old} + \frac{V_G}{d_{i}}
\end{equation}
\end{proposition}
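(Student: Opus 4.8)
The plan is to mirror the rank-one argument of Proposition~\ref{prop3}, decomposing $c_{ij}=h_{ij}+h_{ji}$ and estimating the two hitting times separately, but with the single ``pass-through'' node $l$ replaced by a transition-probability-weighted average over the $k$ neighbours of $i$. The guiding intuition is the same as before: because the new node contributes only $w_{il}$ to each neighbour's degree, the probability $p_{li}=w_{il}/d_l$ of stepping back into $i$ is tiny, so from the point of view of the rest of the graph the insertion of $i$ is a small perturbation and ``old'' hitting times may be reused.

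First I would treat the forward time $h_{ij}$ by a single first-step expansion from $i$, $h_{ij}=1+\sum_{l\in N(i)}p_{il}h_{lj}$, and then argue exactly as in the rank-one case that each neighbour's hitting time to $j$ is essentially unperturbed, $h_{lj}\approx h_{lj}^{old}$, since the corrections are $O(p_{li})$ and $p_{li}\ll 1$. This yields $h_{ij}\approx 1+\sum_{l\in N(i)}p_{il}h_{lj}^{old}$, the forward half of the weighted average.

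The main obstacle is the return time $h_{ji}$, i.e.\ the expected time for a walk started at $j$ to hit the freshly inserted, rarely visited node $i$. Here the clean decomposition $c_{ij}=c_{il}+c_{lj}$ available in rank one is no longer valid, because a walk leaving $i$ need not funnel through a single vertex. The key tool I would use is the expected return time to $i$, namely $V_G^{new}/d_i=(V_G+2d_i)/d_i$ (the volume grows by $2d_i$ upon adding $i$), which quantifies how hard $i$ is to reach. Writing $h_{ji}$ as the cost of first wandering into the neighbourhood of $i$ plus the extra cost of actually crossing into $i$, and using that the walk enters $i$ through neighbour $l$ with frequency proportional to $w_{il}$ (hence weight $p_{il}$), I would establish $h_{ji}\approx \sum_{l\in N(i)}p_{il}h_{jl}^{old}+\bigl(V_G^{new}/d_i-1\bigr)$, where the additive term is what survives of $V_G^{new}/d_i$ after the per-step ``crossing'' constant is accounted for, and where the $p_{li}\ll 1$ assumption again licenses replacing the per-neighbour contributions by their old values.

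Finally I would add the two halves: the $+1$ from $h_{ij}$ cancels the $-1$ from $h_{ji}$, the neighbour sums combine into $\sum_{l\in N(i)}p_{il}(h_{lj}^{old}+h_{jl}^{old})=\sum_{l\in N(i)}p_{il}c_{lj}^{old}$, and the surviving term $V_G^{new}/d_i$ is approximated by $V_G/d_i$ (dropping the negligible additive constant, exactly as $(V_G+2w_{il})/w_{il}$ was simplified to $V_G/w_{il}$ in rank one). This gives Equation~\ref{equa19}. As a consistency check, setting $k=1$ forces $p_{il}=1$ and $d_i=w_{il}$, so the formula collapses to the rank-one result of Proposition~\ref{prop3}. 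I expect the estimate of $h_{ji}$ to demand the most care, since justifying that the $w_{il}$-weighted average of the old neighbour-to-$j$ hitting times together with $V_G^{new}/d_i$ reproduces the return time is precisely where the many-neighbour geometry genuinely departs from the rank-one pendant structure.
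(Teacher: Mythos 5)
Your proposal follows essentially the same route as the paper's proof: a first-step expansion of $h_{ij}$ with old hitting times substituted for the neighbours, a $p_{il}$-weighted decomposition of $h_{ji}$ through the neighbours of $i$, and the return-time identity $1+\sum_{l\in N(i)}p_{il}h_{li}=V_G^{new}/d_i$ (the paper's Lemma~\ref{lem1}) to produce the additive $V_G/d_i$ term, with the leftover constant dropped at the end. The bookkeeping of the $\pm 1$ terms differs cosmetically but is arithmetically identical, so this is the paper's argument.
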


For the proof, see Appendix.

\section{Online Anomaly Detection Algorithm}
\label{iCTDalgorithm}
We return to our original motivation for computing incremental CTD. We are given a dataset $D$ which is {\it representative} of the underlying domain of interest. We want to check if a new data point is an anomaly with respect to $D$. We will use the CTD as a distance metric.

This section describes an online anomaly detection system using the incremental update of the eigensystem of the Laplacian in Section \ref{chapter:incrementalLaplacian} and the incremental estimation of commute time in Section \ref{chapter:incrementalCD}.

Generally CTD is robust against small changes or perturbation in data. Therefore, only the anomaly score of the new data point needs to be
estimated and be compared with the anomaly threshold in the training data. This claim will be verified by experiment in Section \ref{chapter:expres}.

\subsection{CTD-based anomaly detection}
This section reviews the batch method based on CTD to detect anomalies \cite{khoa2010}. The method is described in Algorithm \ref{CTDalgorithm}. First, a mutual $k_1$-nearest neighbor graph is constructed from the dataset. Then the graph Laplacian matrix $L$, its eigenvectors $V$ and eigenvalues $S$ are computed. Finally, a CTD distance-based anomaly detection with a pruning rule proposed by Bay and Schwabacher \cite{bay2003} is used to find the top $N$ anomalies. The anomaly score used is the average CTD of an observation to its $k_2$ nearest neighbors.
%The edge weights are inversely proportional to their Euclidean distances.

\begin{algorithm}[h!]
\caption{CTD-Based Anomaly Detection}
\label{CTDalgorithm}
\textbf{Input:} Data matrix \emph{X}, the numbers of nearest neighbors \emph{$k_1$} (for building the $k$-nearest neighbor graph) and \emph{$k_2$} (for estimating the anomaly score), the number of anomalies to return \emph{N} \\
\textbf{Output:} Top \emph{N} anomalies
\begin{algorithmic}[1]
\STATE Construct the mutual $k$-nearest neighbor graph from the dataset (using $k_1$)\\
\STATE Compute the graph Laplacian matrix $L$, its eigenvectors $V$ and eigenvalues $S$\\
\STATE Find top $N$ anomalies using the CTD based technique with pruning rule (using $k_2$). Each CTD query uses Equation \ref{equa3}\\
\STATE Return top $N$ anomalies
\end{algorithmic}
\end{algorithm}

{\bf Pruning Rule \cite{bay2003}:}  A data point is not an anomaly if its score (e.g. the average distance to its $k$ nearest neighbors) is less than an anomaly threshold. The threshold can be fixed or be adjusted as the score of the weakest anomaly found so far. Using the pruning rule, many non-anomalies can be pruned without carrying out a full nearest neighbors search.

\subsection{Online Algorithms}
Algorithm \ref{iLEDalgorithm} (denote as iLED) is a method to detect anomalies online using the eigenvalues and eigenvectors of the new Laplacian matrix which is updated incrementally. Algorithm \ref{iECTalgorithm} (denote as iECT) on the other hand is a method to detect anomalies online using incremental estimation of commute time based on hitting time.

\begin{algorithm}[h!]
\caption{Online Anomaly Detection using incremental Laplacian Eigen Decomposition (iLED)}
\label{iLEDalgorithm}
\textbf{Input:} Graph \emph{G}, Laplacian matrix \emph{L}, its eigenvalues \emph{S} and eigenvectors \emph{V}, the anomaly threshold \emph{$\tau$} of the training set, and a test data point \emph{p} \\
\textbf{Output:} Determine if $p$ is an anomaly or not \\
\begin{algorithmic}[1]
\STATE Add $p$ to $G$ using the mutual nearest neighbor graph, we have a new graph $G_n$  \\
\STATE Incrementally compute the new eigenvalues and eigenvectors of the new Laplacian $L_n$ using Algorithm \ref{iterativealgorithm}\\
\STATE Use Gram-Schmidt process \cite{golub1996} to orthogonalize the new eigenvectors \\
\STATE Determine if $p$ is an anomaly or not by estimating its anomaly score using CTDs derived from the new eigenpairs. Use pruning rule with threshold $\tau$ to reduce the computation \\
\STATE Return whether $p$ is an anomaly or not
\end{algorithmic}
\end{algorithm}

\begin{algorithm}[h!]
\caption{Online Anomaly Detection using the incremental Estimation of Commute Time (iECT)}
\label{iECTalgorithm}
\textbf{Input:} Graph \emph{G}, Laplacian matrix \emph{L}, its eigenvalues \emph{S} and eigenvectors \emph{V}, the anomaly threshold \emph{$\tau$} of the training set, and a test data point \emph{p} \\
\textbf{Output:} Determine if $p$ is an anomaly or not \\
\begin{algorithmic}[1]
\STATE Add $p$ to $G$ using the mutual nearest neighbor graph, we have a new graph $G_n$  \\
\STATE Determine if $p$ is an anomaly or not by estimating its anomaly score using incremental CTDs mentioned in Section \ref{chapter:incrementalCD}. Use pruning rule with threshold $\tau$ to reduce the computation \\
\STATE Return whether $p$ is an anomaly or not
\end{algorithmic}
\end{algorithm}

When a new data point $p$ arrives, it is connected to graph $G$ created in the training phase. The CTDs are incrementally updated to estimate the anomaly score of $p$ using the approach in sections \ref{chapter:incrementalLaplacian} and \ref{chapter:incrementalCD}. For iLED algorithm, after updating the eigenvalues and eigenvectors, we use Gram-Schmidt process \cite{golub1996} to normalize and orthogonalize the eigenvectors.

\subsection{Analysis}
First, we analyse the incremental eigen decomposition of the Laplacian in Section \ref{chapter:incrementalLaplacian}. Here $n$ is the size of the original graph (Laplacian) and $N$ is the neighborhood size used in $K_N$ (i.e., cardinality of $N_i$). Note $N \ll n$.

It takes constant time to update $\Delta \lambda$ and $O(N^2n)$ to compute $X=K_N^{\text{T}}K_N$, $O(N^3)$ for $X^{-1}$, $O(Nn)$ for $y=K_N^{\text{T}}h$ and $O(N^2)$ for $\Delta v=X^{-1}y$. Since $N \ll n$, we obtain $O(n)$ time for the incremental update of eigenvalues and eigenvectors of the Laplacian.

On the other hand, incremental estimation of commute time update in Section \ref{chapter:incrementalCD} requires $O(m)$ for each query of $c_{lj}^{old}$ where $m$ is the number of eigenvectors used. So if there are $k$ edges added to the graph, it takes $O(km)$ for each query of CTD.

Since we only need to compute the anomaly score of a test data point using the pruning rule with the threshold of anomaly score in the training set, it takes only $O(k_2)$ nearest neighbor search to determine if the test point is an anomaly or not where $k_2$ is the number of nearest neighbors for estimating the anomaly score. For each CTD query, it takes $O(m)$ for Algorithm \ref{iLEDalgorithm} (iLED), and $O(km)$ for Algorithm \ref{iECTalgorithm} (iECT). Therefore, iLED takes $O(n+k_2m)=O(n)$, and iECT takes $O(k_2km)=O(1)$ to determine if a new arriving point is an anomaly or not. Note that since $L$ and $K$ are sparse, we can get better than $O(n)$ for iLED.

\section{Experiments and Results}
\label{chapter:expres}
We report on the experiments carried out to determine and compare the effectiveness of the iECT and iLED methods. To recall, iECT uses the recursive definition of hitting time to calculate CTD while iLED uses the Laplacian definition.

\subsubsection*{Approach} We split a data set into two parts: training and test. We use Algorithm \ref{CTDalgorithm} to compute the top $N$ anomalies in the training set and use the average distance of a data point to its $k_{2}$ nearest neighbor (in CTD) as its anomaly score. The weakest anomaly in the top $N$ set is one which has the smallest average distance to its nearest neighbors and is used as the threshold value $\tau$. Then the anomaly score of each instance $p$ in the test set is calculated based on its $k_{2}$ neighbors in the training set. If this score is greater than $\tau$ then the test instance is reported as an anomaly. During the time searching for the nearest neighbors of $p$, if its average distance to the nearest neighbors found so far is smaller than $\tau$, we can stop the search as $p$ is not anomaly (pruning rule).

\subsubsection*{Data and Parameters}
The experiments were carried out on synthetic as well as real datasets. We chose the number of nearest neighbors $k_1=10$ to build the mutual nearest neighbor graph, $k_2=20$ to estimate the anomaly score, the number of Laplacian eigenvectors $m=50$. In Algorithm \ref{iterativealgorithm}, the threshold to estimate the change of $\Delta \lambda$ was $10^{-6}$ and the maximum iterations was 5. The choice of parameters was determined from the experiments. In all experiments, the batch method was used as the benchmark. The anomaly threshold $\tau$ was set based on the training data. It was the score of the weakest anomaly in the top $N=50$ anomalies found by Algorithm \ref{CTDalgorithm} in the training set.

\subsection{Synthetic datasets}
We created six synthetic datasets, each of which contained several clusters generated from Normal distributions and a number of random points generated from uniform distribution which might be anomalies. The number of clusters, the sizes, and the locations of the clusters were also chosen randomly. Each dataset was divided into a training set and a test set. There were 100 data points in every test set and half of them were random anomalies mentioned above. % $N$ was chosen to be fifty. The reason that $max(0.01n,50)$ was chosen was there were $max(0.01n,50)$ anomalies randomly generated in each dataset.

\subsubsection*{Experiments on Robustness}
We first tested the robustness of CTD between nodes in an existing set when a new data instance is introduced. As $CTD(i,j)$ between nodes $i$ and $j$ is a measure of expected path distance, the hypothesis is that the addition of a new point will have minimal influence on $CTD(i,j)$ and thus the anomaly scores of data points in the existing set are relatively unchanged.

Table \ref{tab:table2} shows the average, standard deviation, minimum, and maximum of anomaly scores of points in graph $G$ before and after a new data point was added to $G$. Graph $G$ was created from the 1,000 point dataset in Figure \ref{fig:dataset}. The result with test point was averaged over 100 test points in the test set. The result shows that the anomaly scores of data instances in $G$ do not change much. This shows CTD is a robust measure, a small change or perturbation in the data will not result in large changes in CTD. Therefore, only the anomaly score of the new point needs to be estimated.

\begin{table}[t]
  \centering
  \caption{Robustness of CTD. The anomaly scores of data instances in existing graph $G$ are relatively unchanged when a new point is added to $G$.}
  \begin{tabular}{|l|r|r|r|r|}
    \hline
                        & Average    & Std       & Min   & Max \\
    \hline
    Without test point  & 10,560.61  & 65,023.18 & 6.13  & 1,104,648.09 \\
    \hline
    With test point     & 10,517.94  & 64,840.39 & 6.13  & 1,101,169.36 \\
   \hline
  \end{tabular}
  \label{tab:table2}
\end{table}

%\begin{figure}[h]
%  \centering
%  \subfloat[Normal test point]{\label{fig:temp}\includegraphics[width=0.25\textwidth]{score_train_test_1k_p713.eps}}
%  \subfloat[Anomalous test point]{\label{fig:temp}\includegraphics[width=0.25\textwidth]{score_train_test_1k_p985.eps}}
%  \caption{Anomaly scores of data points in the training set remain relatively unchanged when adding a new data point to the training graph.}
%  \label{fig:outlierscore}
%\end{figure}

\begin{figure}[h]
	\centering
    \includegraphics[width=0.4\textwidth]{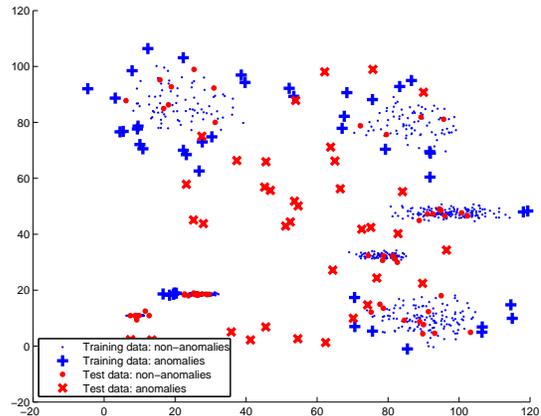}
	\caption{1,000 points dataset with training and test sets}
	\label{fig:dataset}
\end{figure}

\subsubsection*{Experiments on Effectiveness}
We applied iECT and iLED to all the datasets. The effectiveness of the iECT algorithm over iLED is shown in Figure \ref{fig:iLED_iECT_accuracy}. There were 36 anomalies detected by batch method using CTD which are shown in Figure \ref{fig:dataset}. iECT captured all of them and had 8 false positives whose scores were close to the threshold. iLED approach, on the other hand, had better precision with no false positives but worse recall with only 15 anomalies found. The reason was anomalies have more effect on the eigenvectors and eigenvalues of the graph Laplacian and thus iLED was unable to capture many of them. % Note that pruning was not used in Figure \ref{fig:iLED_iECT_accuracy} since we wanted to show the exact anomaly scores.

\begin{figure}[h]
	\centering
    \includegraphics[width=0.4\textwidth]{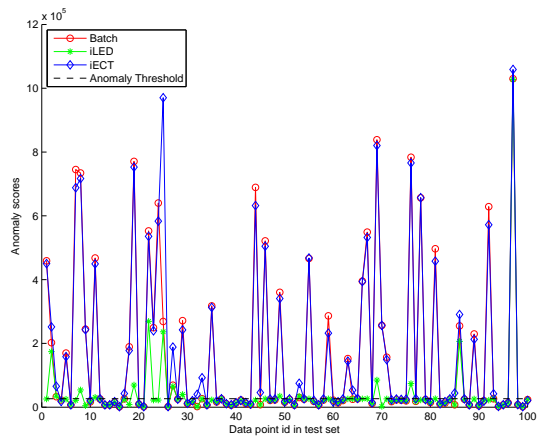}
	\caption{Accuracy of iLED and iECT in 1,000 points dataset. iECT detects anomalies better than iLED.}
	\label{fig:iLED_iECT_accuracy}
\end{figure}

To get a better understanding, Figure \ref{fig:iL} shows the eigenvalues and eigenvectors of the new graph computed using batch method and iLED in 1,000 points dataset where the test point was an anomaly and a non-anomaly. The eigenvalue graph shows the top 50 smallest eigenvalues and the eigenvector graph shows the dot product of the top 50 smallest eigenvectors of the new graph Laplacian for both batch and iLED methods. When a new data point was a non-anomaly, the approximate eigenvalues and eigenvectors were significantly more accurate than those when a new point was an anomaly.

\begin{figure}[h]
  \centering
  \subfloat[Normal point - Eigenvalues]{\label{fig:temp}\includegraphics[width=0.25\textwidth]{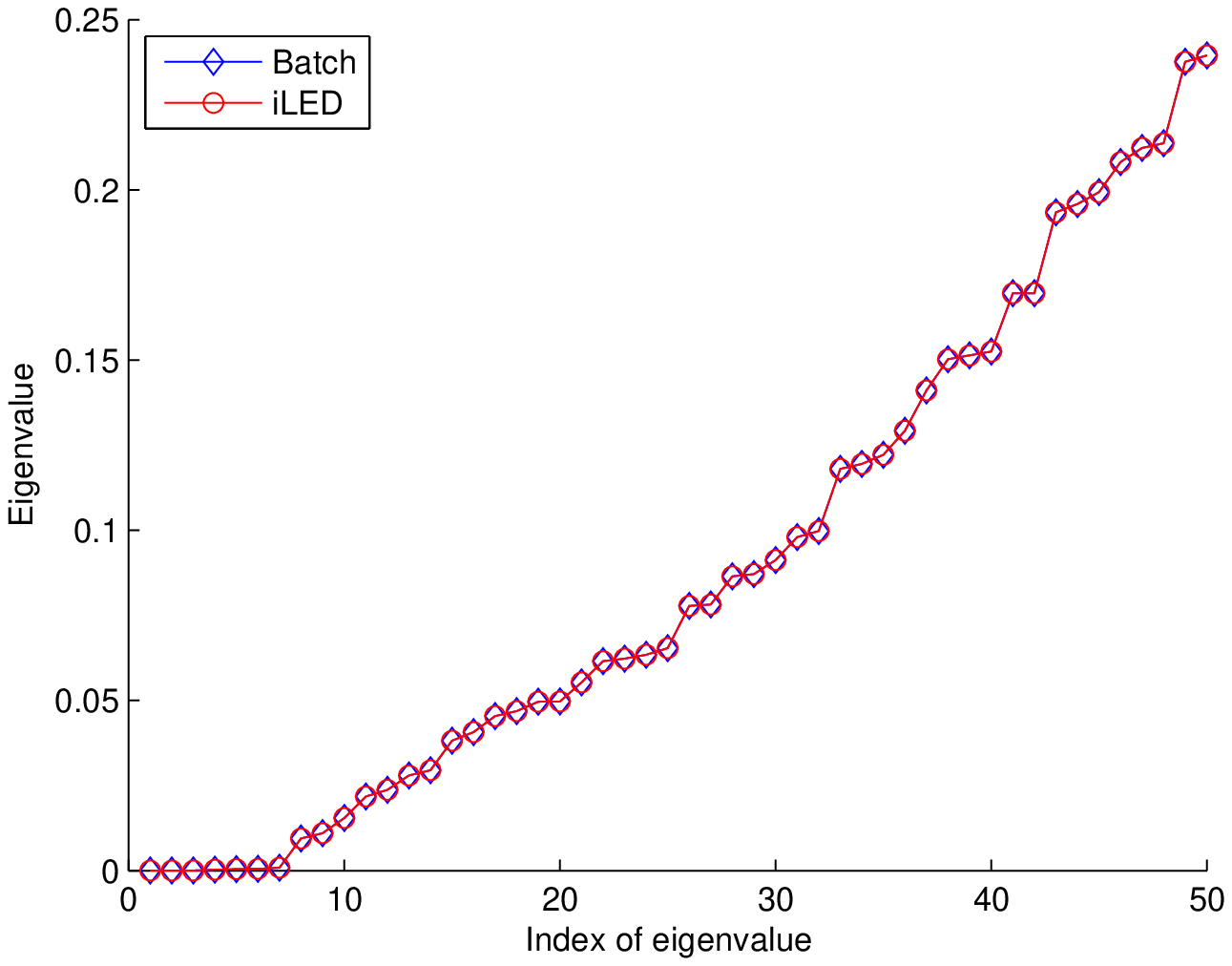}}
  \subfloat[Normal point - Eigenvectors]{\label{fig:temp}\includegraphics[width=0.25\textwidth]{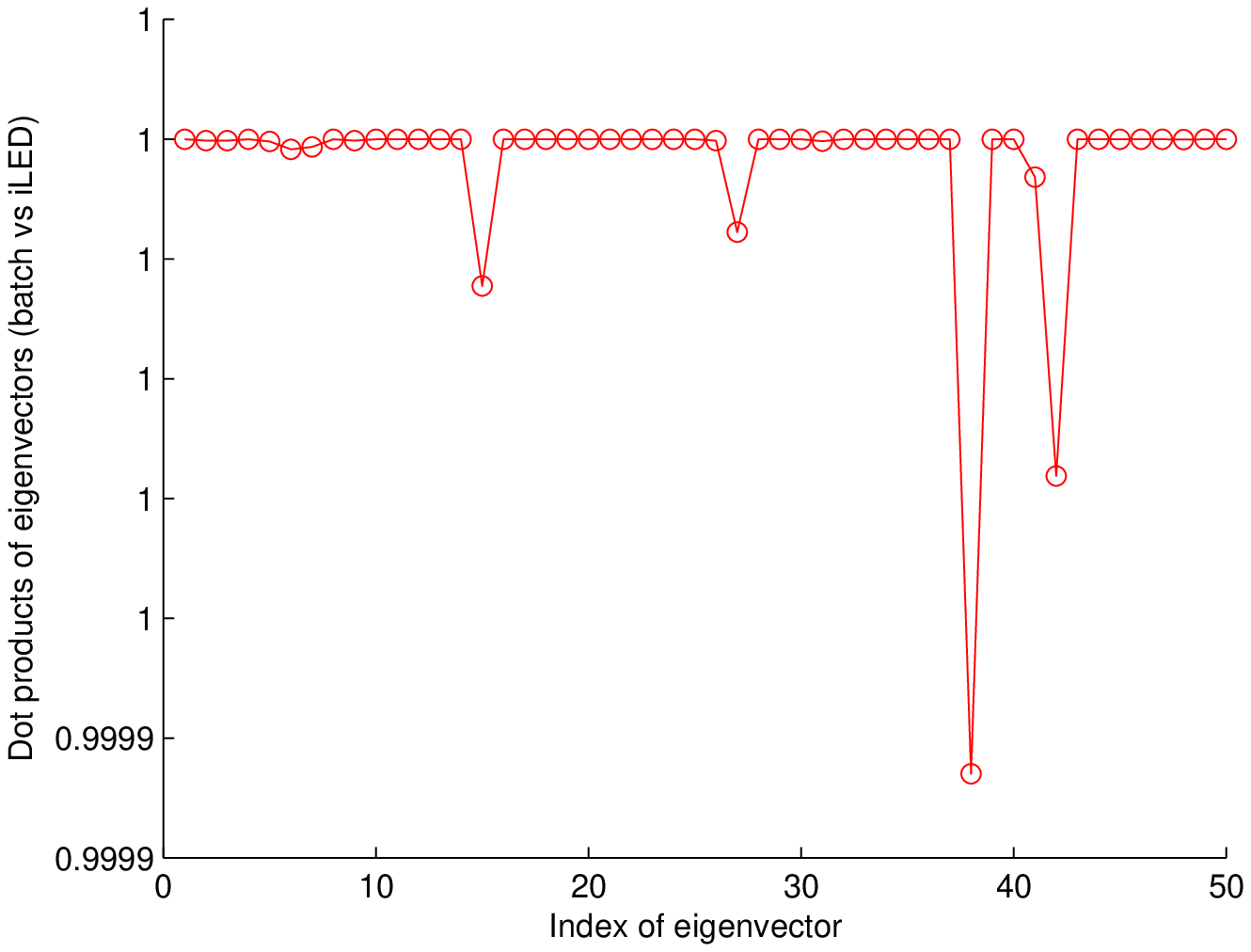}} \\
  \subfloat[Anomaly - Eigenvalues]{\label{fig:temp}\includegraphics[width=0.25\textwidth]{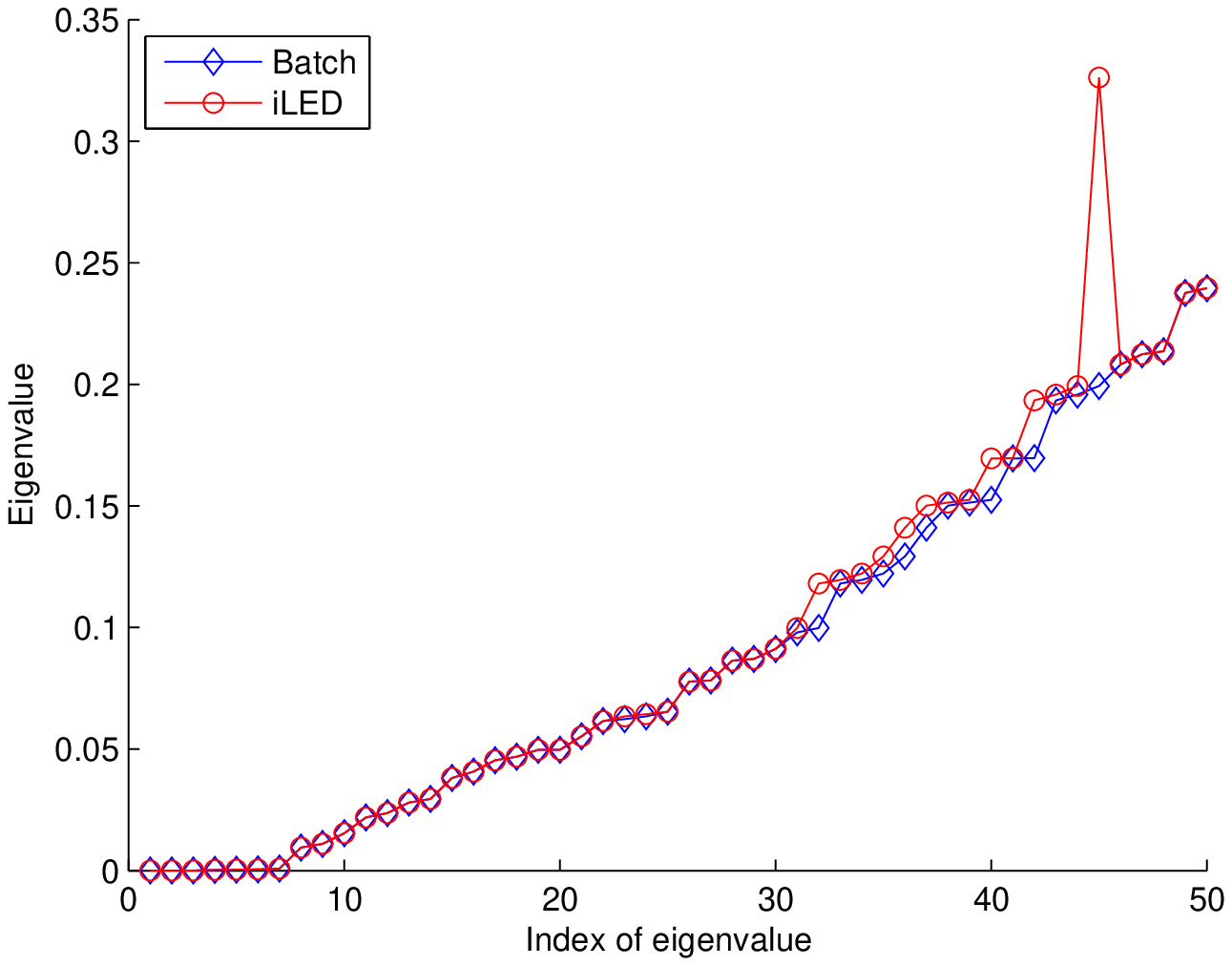}}
  \subfloat[Anomaly - Eigenvectors]{\label{fig:temp}\includegraphics[width=0.25\textwidth]{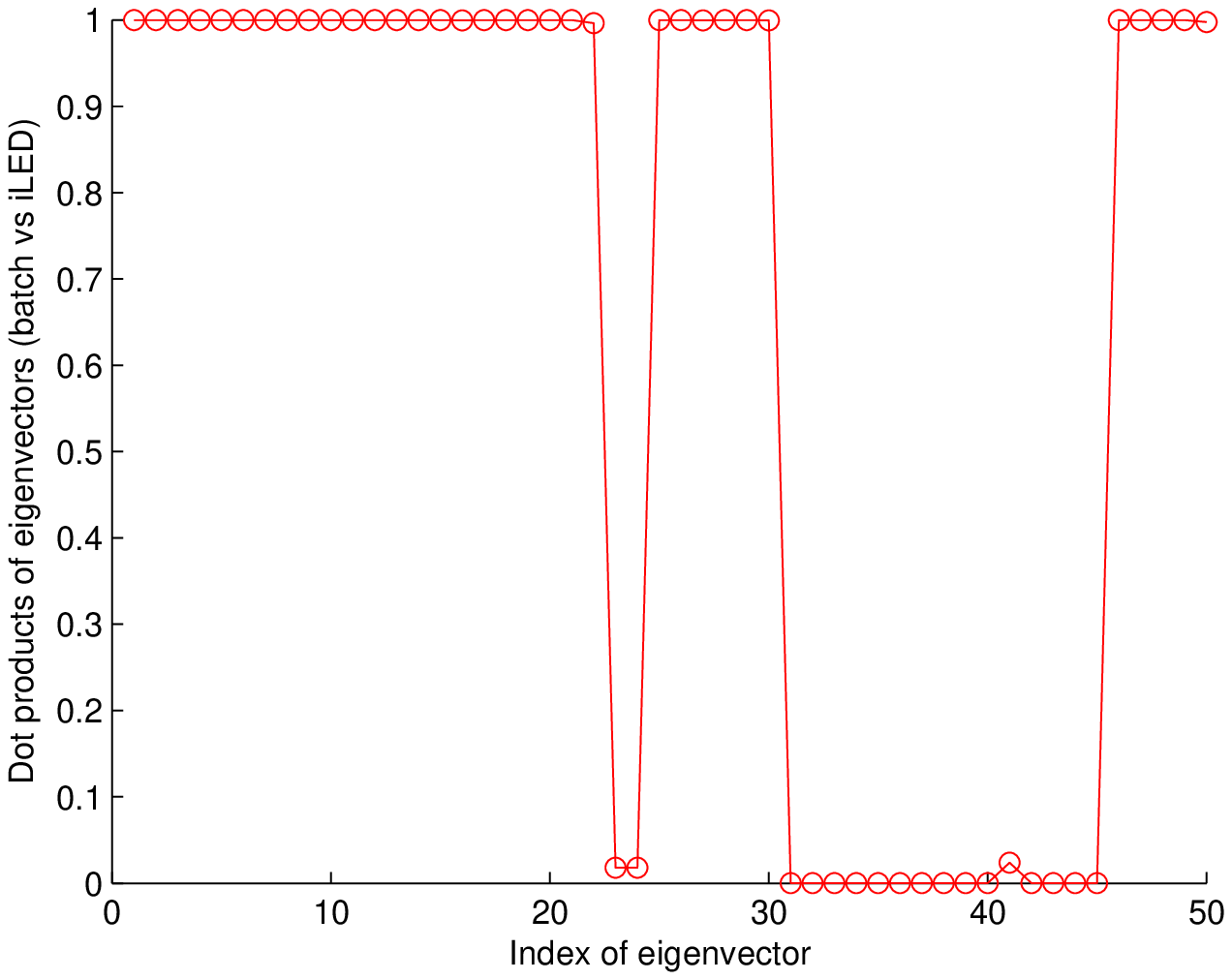}}
  \caption{Accuracy of iLED method. iLED has better approximation when a test point is not an anomaly.}
  \label{fig:iL}
\end{figure}

Table \ref{tab:table1} shows the results in accuracy and performance of iLED and iECT in the synthetic datasets. Average score was the average anomaly score over 100 test points. The precision and recall were for the anomalous class. The time was the average time to process each of 100 test points. iECT generally had better approximation than iLED, did not miss any anomaly and had acceptable false alarms. iLED, on the other hand, did not have any false alarms but did miss many anomalies. Both of them were more efficient than the batch method. Note that the scores shown here were the anomaly scores with pruning rule and the scores for anomalies are always much higher than scores for normal points. Therefore the scores were dominated by the scores of anomalies and that is the reason why iLED had much lower scores. In fact, iLED is more accurate than iECT in estimating the scores of normal points.

%\begin{figure*}
%\centering
%\epsfig{file=rank1.eps}
%\caption{A sample black and white graphic (.eps format)
%that needs to span two columns of text.}
%\end{figure*}

\begin{table*}[t]
  \centering
  \caption{Effectiveness of incremental methods. iECT generally does not miss any anomaly and has acceptable false alarms. iLED, on the other hand, does not have any false alarms but misses many anomalies.} %Both of them are more efficient than the batch method.}
  \begin{tabular}{|r|r|r|r|r|r|r|r|r|r|r|}
    \hline
    Dataset & \multicolumn{4}{|c|}{iLED} & \multicolumn{4}{|c|}{iECT} & \multicolumn{2}{|c|}{Batch}\\
    \hline
    Size & Avg Score & Precision (\%) & Recall (\%) & Time (s) & Avg Score & Precision (\%) & Recall (\%) & Time (s) & Avg Score & Time (s) \\
    \hline
     1,000  & $3.84 \times 10^4$    & 100   & 41.7  & 0.13 & $1.69 \times 10^5$ & 81.8   & 100   & 0.20 & $1.61 \times 10^5$    & 0.17 \\
    10,000  & $7.09 \times 10^5$    & 100   & 53.2  & 1.28 & $4.87 \times 10^6$ & 95.9   & 100   & 1.42 & $4.99 \times 10^6$    & 2.04 \\
    20,000  & $5.36 \times 10^6$    & 100   & 83.3  & 2.64 & $1.75 \times 10^7$ & 80.0   & 100   & 2.96 & $1.70 \times 10^7$    & 4.53 \\
    30,000  & $4.81 \times 10^6$    & 100   & 39.6  & 3.68 & $1.39 \times 10^8$ & 96.0   & 100   & 4.33 & $1.40 \times 10^8$    & 7.13 \\
    40,000  & $3.27 \times 10^6$    & 100   & 15.6  & 4.44 & $5.17 \times 10^7$ & 71.1   & 100   & 5.19 & $4.88 \times 10^7$    & 9.05 \\
    50,000  & $8.88 \times 10^6$    & 100   & 32.5  & 5.70 & $6.15 \times 10^7$ & 87.0   & 100   & 6.61 & $5.96 \times 10^7$    & 11.60 \\
   \hline
  \end{tabular}
  \label{tab:table1}
\end{table*}

There is an interesting dynamic at play between the anomaly, pruning rule, iECT, iLED, and the number of anomalies in the data. iECT was slightly slower than iLED in the experiment. The reason is we have many anomalies in the test set. We know that the pruning rule only works for non-anomalies. Moreover, iLED is faster per CTD query compared to iECT. Therefore, for anomalies, iLED is generally faster. Furthermore, because iLED tends to underestimate the scores of anomalies (and that was the reason it missed some anomalies), anomalies are treated as non-anomalies and the pruning kicks-in making it faster. In practise, since most of the test points are not anomalies, iECT will be more efficient than iLED. It is shown in Figure \ref{fig:iLED_iECT_time} where except a few false alarms, iECT was generally faster than iLED in 50,000 points dataset when test instances were not anomalies. The same tendency also happened in other datasets used in the experiments.

\begin{figure}[t]
	\centering
    \includegraphics[width=0.4\textwidth]{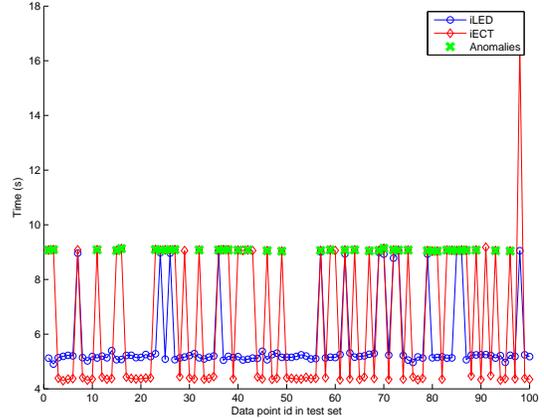}
	\caption{Performance of iLED and iECT in 50,000 points dataset. Except a few false alarms, iECT is generally faster than ilED when a test point is not an anomaly.}
	\label{fig:iLED_iECT_time}
\end{figure}

\subsection{Real Datasets}
\subsubsection{DBLP dataset}
In this section, we evaluated the iECT method on the DBLP co-authorship network. Nodes are authors and edge weights are the number of collaborated papers between the authors. Since the graph is not fully connected, we extracted its biggest component. It has 344,800 nodes and 1,158,824 edges.

We randomly chose a test set of 50 nodes and removed them from the graph. We ensured that the graph remained connected. After training, each node was added back into the graph along with their associated edges.

Since the size of the graph is very large, normal training using the batch mode in Algorithm \ref{CTDalgorithm} is not feasible. Instead we implemented the approximate method proposed by Spielman and Srivastava (SS) \cite{spielman2008} and used the underlying linear time CMG solver proposed by Koutis \cite{koutis2009}. The SS methods combines random projections with a linear time solver for diagonally dominant matrices to approximate the CTD. The SS method creates a matrix $Z$ from which CTD between two nodes can be computed in $k=O(\log n)$ time with provable accuracy. In practice we can use $k$ to be much smaller than $O(\log n)$ and still attain highly accurate results.

We trained the graph using the SS approach, stored the matrix $Z$ and used $Z$ to query the $c_{lj}^{old}$ in iECT algorithm. The batch method here is the CTD approximation using the matrix $Z_{new}$ created from the new graph after adding each test data point. The parameter for random projection was $k=200$.

The result shows that it took 0.0066 seconds on average over 50 test data points to detect whether each test point was an anomaly or not. The batch method, which is the fastest approximation of CTD to date, required 944 seconds on average to process each test data point. This dramatically highlights the constant time complexity of iECT algorithm and suggests that iECT is highly suitable for the computation of CTD in an incremental fashion. Since there was no anomaly in the random test set, we cannot report the detection accuracy here. The average anomaly score over all the test points of iECT was 1.1 times higher than the batch method. This shows the relatively high accuracy of iECT approximation even in a very large graph.

\subsubsection{KDD Cup 1999 datasets}
We used the 10\% dataset from the KDD cup 1999 competition provided by UCI Machine Learning Repository \cite{FrankAsuncion2010}. It was used to build detection tools of network attacks or intrusions. Since the dataset is huge and there are more anomalies than normal instances, we sampled 2,200 data points from it where there were 2,000 normal points and 200 anomalies (network intrusions). Categorical features were ignored and 38 numerical features were used. The dataset was divided into a training set and a test set with 100 data points.

iLED and iECT were applied on this dataset and min-max scaling was used as data normalization. iLED had a precision of 100\% and a recall of 66.7\% while iECT had a precision of 75\% and a recall of 100\%. The average anomaly scores of iLED and iECT were 2\% and 1\% lower than that of the batch method, respectively.

\subsubsection{NICTA datasets}
The dataset is from a wireless mesh network which has seven nodes deployed by NICTA at the School of IT, University of Sydney \cite{zaidi2009}. It used a traffic generator to simulate traffic on the network. Packets were aggregated into one-minute time bins and the data was collected in 24 hours. There were 391 origin-destination flows and 1,270 time bins. Some anomalies were introduced to the network including DOS attacks and ping floods. The dataset was divided into a training set and a test set with 100 data points.

iLED and iECT were applied on this dataset and min-max scaling was used as data normalization. iLED had a precision of 100\% and a recall of 27.3\% while iECT had a precision of 84.6\% and a recall of 100\%. The average anomaly scores of iLED and iECT were 20\% lower and 2\% higher than that of the batch method, respectively. The tendency of the detection here of iLED and iECT are also similar to those of the synthetic datasets.

\subsection{Summary and Discussion}
The experimental results on both synthetic and real datasets show that iECT generally has better detection ability than iLED. iECT has very high recall and acceptable precision. iLED, on the other hand, has very high precision but low recall. Both of them are faster than the batch method. The results on real datasets collected from different sources also have similar tendency showing the reliability and effectiveness of the proposed methods.

The experiments also reveal that iLED tends to underestimate the CTDs for anomalies while iECT tends to overestimate the CTDs for non-anomalies. It leads to a high precision for iLED and a high recall for iECT. If we can come up with a strategy to combine the strengths of the two methods, we can have a more accurate estimation. iECT is faster than iLED but it can only be used in case where a new test point is added and cannot be used when there are weigh updates in the graph. iLED on the other hand can be used in both cases by just changing the perturbation matrix.

\section{Related work}
\label{chapter:related}
Incremental learning using an update on eigen decomposition has been studied for a long time. Early work studied the rank one modification of the symmetric eigen decomposition \cite{golub1973, bunch1978, gu1994}. The authors reduced the original problem to the eigen decomposition of a diagonal matrix. Though they can have a good approximation of the new eigenpair, they are not suitable for online applications nowsaday since they have at least $O(n^2)$ computation for the update.

More recent approach was based on the matrix perturbation theory \cite{champagne1994, agrawal2008}. They used the first order perturbation analysis of the rank-one update for a data covariance matrix to compute the new eigenpair. The algorithms have a linear time computation. The advantage of using the covariance matrix is if the perturbation involving an insertion of a new point, the size of the covariance matrix is unchanged. This approach cannot be applied directly to increasing matrix size due to an insertion of a new point. For example, in spectral clustering or CTD-based anomaly detection, the size of the graph Laplacian matrix increases when a new point is added to the graph.

Ning et. al \cite{ning2007} proposed an incremental approach for spectral clustering with application to monitor evolving blog communities. It incrementally updates the eigenvalues and eigenvectors of the graph Laplacian matrix based on a change of an edge weight on the graph using the first order error of the generalized eigen system. %Though the authors claimed that it can work for a sequence of weight changes involving the insertion the a new point, the algorithm is only suitable for weight update cases.

\section{Conclusion}
\label{chapter:conclusion}
The paper shows two novel approaches to compute CTD incrementally. The first one incrementally updates the eigenvectors and eigenvalues of the graph Laplacian matrix. It is linearly scaled and can be applied to the estimation of CTD incrementally or any application involving graph spectral computation. The second approach incrementally estimates CTD in constant time using the property of random walk and hitting time. We design novel anomaly detection algorithms using two approaches to detect anomalies online. The experimental results show the effectiveness of the proposed approaches in terms of performance and accuracy. It took less than 7 milliseconds on average to process a new arriving point in a graph of more than 300,000 nodes and one million edges. Moreover, the idea of this work can be extended in many applications which use the CTD and it is the direction for our future work.

% conference papers do not normally have an appendix

% use section* for acknowledgement
\section*{Acknowledgment}
The authors of this paper acknowledge the financial support of the Capital Markets CRC.

% trigger a \newpage just before the given reference
% number - used to balance the columns on the last page
% adjust value as needed - may need to be readjusted if
% the document is modified later
%\IEEEtriggeratref{8}
% The "triggered" command can be changed if desired:
%\IEEEtriggercmd{\enlargethispage{-5in}}

% references section

% can use a bibliography generated by BibTeX as a .bbl file
% BibTeX documentation can be easily obtained at:
% http://www.ctan.org/tex-archive/biblio/bibtex/contrib/doc/
% The IEEEtran BibTeX style support page is at:
% http://www.michaelshell.org/tex/ieeetran/bibtex/
%\bibliographystyle{IEEEtran}
% argument is your BibTeX string definitions and bibliography database(s)
%\bibliography{IEEEabrv,../bib/paper}
%
% <OR> manually copy in the resultant .bbl file
% set second argument of \begin to the number of references
% (used to reserve space for the reference number labels box)
\bibliographystyle{IEEEtran}
\bibliography{CommuteDistance_arxiv}

\appendix
We provide relation between perturbation and incidence matrix, and the proof details of Propositions \ref{prop2} and \ref{prop4}.

%\section{Detailed Proofs}
\subsection{Incidence matrix and perturbation}
It is well known that the Laplacian of a graph can be expressed in terms of an incidence matrix.

\begin{definition}
Given a weighted graph $G=(V,E,W)$ and an arbitrary but fixed orientation of the edges, the incidence matrix  $R$ is a $|V| \times |E|$ matrix where the columns of the matrix are defined as
\begin{equation}
r_{e}(w) =
    \begin{cases}
    \sqrt{w} & \text{at location $v$ if $v$ is the head of $e$} \\
    -\sqrt{w} & \text{at location $v$ if $v$ is the tail of $e$} \\
    0 & \text{otherwise} \\
    \end{cases}
\end{equation}
Note that $r_{e}$ is a column vector of size $|V|$. We can also express each $r_{e} = \sqrt{w}u_{e}$ where $u_{e}$ is a column vector with entries 1, -1 at locations corresponding to head and tail of $e$ and 0 at other locations.
\end{definition}

\begin{fact}
If $L$ is a graph Laplacian then $L=RR^{\text{T}}$ \cite{chung1997}.
\end{fact}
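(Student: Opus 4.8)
The plan is to expand the product $RR^{\text{T}}$ as a sum of rank-one contributions, one per edge, and then match the resulting matrix entry-by-entry against $D - A$. First I would use the elementary fact that $RR^{\text{T}}$ equals the sum of outer products of the columns of $R$ with themselves, so that $RR^{\text{T}} = \sum_{e \in E} r_{e} r_{e}^{\text{T}}$. Substituting the factorization $r_{e} = \sqrt{w}\, u_{e}$ from the definition of the incidence matrix then gives $RR^{\text{T}} = \sum_{e \in E} w\, u_{e} u_{e}^{\text{T}}$, reducing everything to an analysis of a single generic summand.

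The core step is to evaluate one term $w\, u_{e} u_{e}^{\text{T}}$ for an edge $e$ joining nodes $a$ and $b$. Since $u_{e} = e_{a} - e_{b}$, the outer product expands as $u_{e} u_{e}^{\text{T}} = e_{a} e_{a}^{\text{T}} + e_{b} e_{b}^{\text{T}} - e_{a} e_{b}^{\text{T}} - e_{b} e_{a}^{\text{T}}$, which is the $|V| \times |V|$ matrix carrying $+1$ at the diagonal positions $(a,a)$ and $(b,b)$ and $-1$ at the off-diagonal positions $(a,b)$ and $(b,a)$, all other entries being zero. Multiplying by $w$ scales these to $\pm w$. I would record here that this contribution does not depend on the chosen orientation of $e$: reversing head and tail replaces $u_{e}$ by $-u_{e}$, leaving $u_{e} u_{e}^{\text{T}}$ unchanged, so the arbitrary but fixed orientation used to define $R$ is immaterial to the final matrix.

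Finally I would sum over all edges and read off the entries. The diagonal entry at $(i,i)$ collects a contribution $+w$ from every edge incident to $i$, giving $\sum_{j \in N(i)} w_{ij} = d_{i}$, which is exactly the $i$-th diagonal entry of $D$. An off-diagonal entry at $(i,j)$ with $i \neq j$ receives $-w_{ij}$ when $(i,j) \in E$ and $0$ otherwise, which is precisely $-A_{ij}$. Therefore $RR^{\text{T}} = D - A = L$, as claimed.

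I do not anticipate a genuine obstacle, since the statement is a direct computation; the only points requiring a moment's care are the orientation-independence noted above and the bookkeeping that each undirected edge of $G$ appears exactly once in the sum, so that degrees and adjacencies are counted with the correct multiplicity.
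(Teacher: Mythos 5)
Your proof is correct. The paper does not actually prove this fact---it is stated as known and attributed to \cite{chung1997}---but your argument is the standard one: writing $RR^{\text{T}}=\sum_{e}w_{e}u_{e}u_{e}^{\text{T}}$, checking a single rank-one summand entrywise, and summing to recover $D-A$, with the correct observation that the arbitrary edge orientation cancels in the outer product. This edge-by-edge decomposition is exactly the one the paper itself relies on later (in the proof of Proposition~\ref{prop1}, where $\Delta L=\sum_{e\in E_{n}}w_{e}u_{e}u_{e}^{\text{T}}$), so your route is fully consistent with how the fact is used.
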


\begin{fact}
If  an edge $e$ undergoes a  similarity change $\Delta w_{e}$, the new graph Laplacian $L_n$ is $L_n=R_nR_n^{\text{T}}$ where $R_n$=[$R$ $r_{e}(\Delta w_{e})$] \cite{ning2007}. Therefore, a change in an edge weight can be represented by appending an  incidence vector to $R$.
\end{fact}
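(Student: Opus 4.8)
The plan is to reduce the claim to the already-established identity $L = RR^{\text{T}}$ together with a single block-matrix multiplication. First I would write $R_n = [R \; r_{e}(\Delta w_{e})]$ as the horizontal concatenation of the old incidence matrix $R$ with one extra column, and observe that for any matrix $R$ and column vector $c$ the concatenation satisfies $[R\;c][R\;c]^{\text{T}} = RR^{\text{T}} + cc^{\text{T}}$. Applying this with $c = r_{e}(\Delta w_{e})$ yields $R_n R_n^{\text{T}} = RR^{\text{T}} + r_{e}(\Delta w_{e}) r_{e}(\Delta w_{e})^{\text{T}} = L + r_{e}(\Delta w_{e}) r_{e}(\Delta w_{e})^{\text{T}}$. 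Thus it suffices to show that the rank-one term $r_{e}(\Delta w_{e}) r_{e}(\Delta w_{e})^{\text{T}}$ equals the increment $\Delta L = L_n - L$ caused by changing the weight of $e$ by $\Delta w_{e}$.

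Next I would compute both sides explicitly. Using the representation $r_{e}(\Delta w_{e}) = \sqrt{\Delta w_{e}}\, u_{e}$ from the definition, the rank-one term becomes $\Delta w_{e}\, u_{e} u_{e}^{\text{T}}$, where $u_{e} = e_{p} - e_{q}$ for an edge $e$ with head $p$ and tail $q$. Expanding the outer product gives a matrix with $+\Delta w_{e}$ at the diagonal positions $(p,p)$ and $(q,q)$, $-\Delta w_{e}$ at the off-diagonal positions $(p,q)$ and $(q,p)$, and zero elsewhere. On the other hand, increasing $w_{e}$ by $\Delta w_{e}$ raises the degrees $d_{p}$ and $d_{q}$ each by $\Delta w_{e}$ and makes the adjacency entries $A_{pq}$ and $A_{qp}$ larger by $\Delta w_{e}$; since $L = D - A$, the increment $\Delta L$ carries exactly $+\Delta w_{e}$ on the two diagonal entries at $p,q$ and $-\Delta w_{e}$ on the two off-diagonal entries, and zero elsewhere. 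These two matrices coincide entry-by-entry, so $r_{e}(\Delta w_{e}) r_{e}(\Delta w_{e})^{\text{T}} = \Delta L$, and therefore $R_n R_n^{\text{T}} = L + \Delta L = L_n$.

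The argument is essentially bookkeeping, so there is no deep obstacle; the only point requiring care is the entrywise matching of the rank-one outer product with $\Delta L$, which amounts to recognizing $\Delta w_{e}\, u_{e} u_{e}^{\text{T}}$ as precisely the Laplacian of the single-edge graph carrying weight $\Delta w_{e}$ on $e$. The choice of orientation (which endpoint is head versus tail, i.e. the sign of $u_{e}$) is irrelevant because $u_{e} u_{e}^{\text{T}}$ is invariant under $u_{e} \mapsto -u_{e}$, consistent with the fact that an undirected weight change does not depend on a fixed orientation. The closing sentence of the statement then follows immediately: appending the incidence column $r_{e}(\Delta w_{e})$ to $R$ realizes the weight change at the level of the factorization $L = RR^{\text{T}}$.
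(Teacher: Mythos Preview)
Your argument is correct. Note, however, that the paper does not actually supply its own proof of this Fact: it is stated with a citation to \cite{ning2007} and left unproved. The block-multiplication identity $[R\;c][R\;c]^{\text{T}}=RR^{\text{T}}+cc^{\text{T}}$ together with the entrywise identification $\Delta w_{e}\,u_{e}u_{e}^{\text{T}}=\Delta L$ that you carry out is exactly the computation the paper performs in the immediately following Proposition~\ref{prop1} (for the case of a new node rather than a weight update), so your proof is fully in line with the paper's methods even though there is no in-paper proof to compare against.
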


\begin{proposition}
\label{prop1}
Denote $R$ as an incidence matrix of a given graph $G=(V,E,W)$. Suppose a new node is added to the graph which results in $k$ new edges. Let $R_{n}$ represent the new $|V+1| \times |E+k|$ matrix and let $\Delta R$ represent the matrix of new incidence vectors of size $|V+1| \times k$. Then the new Laplacian $L_{n}$ can be expressed in terms of the old Lapalcian $L$ as
\[
L_{n} = \begin{bmatrix}L & 0 \\ 0 & 0\end{bmatrix} + \Delta L
\]
\end{proposition}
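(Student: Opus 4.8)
The plan is to reduce everything to the identity $L_n = R_n R_n^{\text{T}}$ established earlier for incidence matrices, and then to read off the block structure of $R_n$ directly from the definition of the incidence vectors. The only real content is to observe that appending a node leaves the original edges untouched in the enlarged coordinate system, so that the new Laplacian splits cleanly into a padded copy of $L$ and a low-rank remainder.

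First I would order the columns of $R_n$ so that the $|E|$ columns arising from the original edges precede the $k$ columns arising from the new edges. For an original edge $e \in E$, neither endpoint is the new node, so by the ``$0$ otherwise'' clause in the definition of $r_e$ its incidence vector carries a $0$ in the coordinate indexed by the new node; hence, viewed in the $|V+1|$-dimensional space, it is exactly the old vector $r_e$ with a zero appended. Collecting these columns shows that the first block of $R_n$ is $\begin{bmatrix} R \\ 0 \end{bmatrix}$, an $|V+1| \times |E|$ matrix, while the remaining $k$ columns are, by definition, the new incidence vectors, i.e. the matrix $\Delta R$ of size $|V+1| \times k$. Thus
\[
R_n = \left[\, \begin{bmatrix} R \\ 0 \end{bmatrix} \;\; \Delta R \,\right].
\]

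Next I would compute $L_n = R_n R_n^{\text{T}}$ by block multiplication. Since $R_n$ is a horizontal concatenation of two blocks, its Gram matrix is the sum of the Gram matrices of the blocks:
\[
L_n = \begin{bmatrix} R \\ 0 \end{bmatrix}\begin{bmatrix} R^{\text{T}} & 0 \end{bmatrix} + \Delta R\,\Delta R^{\text{T}} = \begin{bmatrix} RR^{\text{T}} & 0 \\ 0 & 0 \end{bmatrix} + \Delta R\,\Delta R^{\text{T}}.
\]
Using $L = RR^{\text{T}}$ and setting $\Delta L := \Delta R\,\Delta R^{\text{T}}$ yields exactly the claimed identity. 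As a bonus, $\Delta L$ is automatically symmetric with rank at most $k$, which is consistent with the ``rank $k$'' terminology used in the body of the paper.

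There is no genuine obstacle here; the argument is essentially bookkeeping. The one point worth stating carefully is the justification that an original edge contributes a zero in the new-node coordinate, together with the fact that the Gram matrix of a column concatenation equals the sum of the Gram matrices of the blocks, so that the particular column ordering chosen above is harmless.
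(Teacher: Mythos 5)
Your proposal is correct and follows essentially the same route as the paper: both write $R_n$ as the horizontal concatenation of the zero-padded old incidence matrix and $\Delta R$, and expand $R_nR_n^{\text{T}}$ as the sum of the two Gram matrices, with $\Delta L = \Delta R\,\Delta R^{\text{T}} = \sum_{e\in E_n} w_e u_e u_e^{\text{T}}$. You simply make explicit a bookkeeping point the paper leaves implicit, namely that original edges contribute a zero in the new node's coordinate.
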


Also note that if $(\lambda,v)$ is an eigenpair of $L$ then $\left(\lambda, \left(\begin{array}{c}v \\ 0 \end{array}\right)\right)$ is an eigenpair of $\begin{bmatrix}L & 0 \\ 0 & 0\end{bmatrix}$.

%For definition of incidence matrix and the proof, see Appendix.\textbf{Proof for Proposition \ref{prop1}:}
\begin{proof}
Let $E_{n}$ be the set of new edges resulting from the addition of a new node. Then
\begin{displaymath}
\begin{split}
R_nR_n^{\text{T}}   &= \left[\begin{pmatrix}R \\ 0\end{pmatrix}\Delta R\right]\left[\begin{pmatrix}R \\ 0\end{pmatrix}\Delta R\right]^{\text{T}}  \\
                    &= \begin{bmatrix}L & 0 \\ 0 & 0\end{bmatrix} + \sum_{e \in E_{n}}w_{e}u_{e}u_{e}^{\text{T}}
                    = \begin{bmatrix}L & 0 \\ 0 & 0\end{bmatrix} + \Delta L = L_n
\end{split}
\end{displaymath}
\end{proof}

\subsection{Incremental Update of Eigenvalues and Eigenvectors}
\textbf{Proof for Proposition \ref{prop2}:}
\begin{proof}
Given the eigen decomposition of the new Laplacian matrix:
\begin{equation}
(L + \Delta L)(v + \Delta v) = (\lambda + \Delta \lambda)(v + \Delta v)
\end{equation}

The perturbation is:
\begin{equation}
\label{equa5}
\Delta L = \sum_{e \in E_{n}}w_{e}u_{e}u_{e}^{\text{T}}
\end{equation}

Since $Lv = \lambda v$,
\begin{equation}
\label{equa6}
L\Delta v + \Delta Lv + \Delta L\Delta v = \Delta \lambda v + \lambda\Delta v + \Delta \lambda\Delta v
\end{equation}

Left multiply both sides by $v^\text{T}$:
\begin{displaymath}
v^\text{T}L\Delta v + v^\text{T}\Delta Lv + v^\text{T}\Delta L\Delta v = v^\text{T}\Delta \lambda v + v^\text{T}\lambda\Delta v + v^\text{T}\Delta \lambda\Delta v
\end{displaymath}

Since $v^{\text{T}}L = \lambda v^{\text{T}}$ ($L$ is symmetric):
\begin{displaymath}
v^{\text{T}}\Delta \lambda(v + \Delta v) = v^{\text{T}}\Delta L (v + \Delta v)
\end{displaymath}

Then we have the update of the eigenvalue $\lambda$:
\begin{equation}
%\label{equa7}
\begin{split}
\Delta \lambda  &= \frac{v^{\text{T}}\Delta L (v + \Delta v)}{v^{\text{T}}(v + \Delta v)}
                = \frac{v^{\text{T}}\Delta L (v + \Delta v)}{1 + v^{\text{T}}\Delta v} \\
                &= \frac{v^{\text{T}}\sum_{e \in E_{n}}w_{e}u_{e}u_{e}^{\text{T}} (v + \Delta v)}{1 + v^{\text{T}}\Delta v} \\
                &= \frac{\sum_{e \in E_{n}}w_{e}[v(i)-v(j)][v(i)-v(j)+\Delta v(i)-\Delta v(j)]}{1 + v^{\text{T}}\Delta v} \\
\end{split}
\end{equation}

From equation \ref{equa6} we have:
\begin{displaymath}
[L + \Delta L - (\lambda + \Delta \lambda)I]\Delta v = (\Delta \lambda I - \Delta L)v
\end{displaymath}

Denotes $K= L + \Delta L - (\lambda + \Delta \lambda)I$ and $h=(\Delta \lambda I - \Delta L)v$,
we have $\Delta v = K^{-1}h.$
\end{proof}

\subsection{Rank $k$ Perturbation}
\begin{lemma}
\label{lem1}
Denote $l\in G$ is one of $k$ neighbors of $i$ and $j$ is a node in $G$. We have:
\begin{displaymath}
\sum_{l \in N(i)}p_{il}h_{li} = \frac{V_G}{d_{i}}+1.
\end{displaymath}
\end{lemma}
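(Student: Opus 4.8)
The plan is to recognize the left-hand side as the principal term in the first-return decomposition for the newly added node $i$, and then to evaluate the expected first return time a second way via the stationary distribution; equating the two expressions yields the identity.

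First I would set up the return-time identity by conditioning on the first step. A random walk started at $i$ must step to some neighbor $l \in N(i)$, which happens with probability $p_{il}$, and then requires $h_{li}$ expected steps to reach $i$ again for the first time. Summing over the first step therefore gives that the expected first return time to $i$ equals $1 + \sum_{l \in N(i)} p_{il} h_{li}$, i.e. exactly one more than the quantity we wish to evaluate. Here the leading $1$ accounts for the single step out of $i$, and the hitting times $h_{li}$ are interpreted in the graph containing $i$.

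Next I would compute this same return time from the stationary distribution. By the Fact that $\pi(v) = d_v/V_G$ is stationary, the expected first return time to $i$ is $1/\pi(i)$, that is, the graph volume divided by $d_i$ — the same principle already exploited in the worked example of Section \ref{chapter:CTD}. The subtlety I expect to be the main obstacle is the volume bookkeeping: because $i$ is a newly attached node, the return time and the hitting times live in the new graph, whose volume is $V_G + 2 d_i$, since each of the new edges contributes its weight to the degrees of both endpoints so the total degree increases by $2d_i$ (with $d_i = \sum_{l \in N(i)} w_{il}$, and $V_G$ denoting the volume of $G$ \emph{before} $i$ was attached). Hence the return time equals $(V_G + 2 d_i)/d_i = V_G/d_i + 2$.

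Finally I would equate the two evaluations, $1 + \sum_{l \in N(i)} p_{il} h_{li} = V_G/d_i + 2$, and subtract $1$ from both sides to obtain $\sum_{l \in N(i)} p_{il} h_{li} = V_G/d_i + 1$, which is the claim. The only genuine care required is to keep the old volume $V_G$ distinct from the new one and to retain the lone $+1$ coming from the first step out of $i$; every remaining move is a direct substitution.
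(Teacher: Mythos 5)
Your proof is correct and follows essentially the same route as the paper's: both decompose the expected first-return time to $i$ by conditioning on the first step, equate it with $(V_G+2d_i)/d_i$ (the new graph's volume over $d_i$, obtained from the stationary distribution/reversibility), and subtract the $1$ for the initial step. Your explicit bookkeeping of the old versus new volume is exactly the point the paper handles with the expression $\frac{V_G+2d_i}{d_i}$.
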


\begin{proof}
Using the reversibility property of the random walk, it is easy to prove that the expected number of steps that a random walk which has just visited node $i$ will take before returning back to $i$ is $graph\text{-}volume/d_{i}$ \cite{lovasz1993}.

In case of $i$, this distance equals to the distance from $i$ to one of its neighbors $l$ (one step) plus the hitting time $h_{li}$. Since the random walk goes from $i$ to $l$ with the probability $p_{il}$, we have $1 + \sum_{l \in N(i)}p_{il}h_{li} = \frac{V_G+2d_{i}}{d_{i}}$. Therefore, $\sum_{l \in N(i)}p_{il}h_{li} = \frac{V_G}{d_{i}}+1$.
\end{proof}

\vspace{10pt}
\textbf{Proof for Proposition \ref{prop4}:} %Let $l\in G$ be one of the $k$ neighbors of $i$ and $j$ is a node in $G$. The approximate commute distance between nodes $i$ and $j$ is:
%\begin{displaymath}
%%\label{equa19}
%c_{ij} \approx \sum_{l \in N(i)}p_{il}c_{lj}^{old} + \frac{V_G}{d_{i}}
%\end{displaymath}

\begin{proof} (Sketch)
By definition,
\begin{displaymath}
h_{ij} = 1 + \sum_{l \in N(i)}p_{il}h_{lj} = 1 + \sum_{l \in N(i)}p_{il}(1 + \sum_{q \in N(l)}p_{lq}h_{qj})
\end{displaymath}

Using the same approach as the rank one case,
\begin{displaymath}
\begin{split}
h_{ij}  &= 1 + \sum_{l \in N(i)}p_{il}[1 + (1-p_{li})\sum_{q \in N(l), q\neq i}p_{lq}^{old}h_{qj}^{old} + p_{li}h_{ij}] \\
        &= 1 + \sum_{l \in N(i)}p_{il}[1 + (1-p_{li})(h_{lj}^{old}-1) + p_{li}h_{ij}]
\end{split}
\end{displaymath}

After a few manipulations, we have
\begin{displaymath}
h_{ij} = \frac{1 + \sum_{l \in N(i)}p_{il}h_{lj}^{old}-\sum_{l \in N(i)}p_{il}p_{li}h_{lj}^{old}+\sum_{l \in N(i)}p_{il}p_{li}}{1-\sum_{l \in N(i)}p_{il}p_{li}}.
\end{displaymath}

Because $\sum_{l \in N(i)}p_{il}p_{li} \ll 1$,
\begin{equation}
\label{equa16}
h_{ij} \approx 1 + \sum_{l \in N(i)}p_{il}h_{lj}^{old}.
\end{equation}

Since the commute distance between two nodes is the average of all possible path-length between them, $h_{ji} \approx \frac{1}{k}\sum_{l \in N(i)}(h_{jl} + h_{li})$. Instead of using the normal average, we take into account the probability $p_{il}$:
\begin{equation}
\label{equa17}
h_{ji} \approx \sum_{l \in N(i)}p_{il}(h_{jl} + h_{li}) = \sum_{l \in N(i)}p_{il}h_{jl} + \sum_{l \in N(i)}p_{il}h_{li}
\end{equation}

We have $h_{jl}\approx h_{jl}^{old}$. Moreover, from Lemma \ref{lem1} we have $\sum_{l \in N(i)}p_{il}h_{li} = \frac{V_G}{d_{i}}+1$.
Then from \ref{equa17},
\begin{equation}
\label{equa18}
h_{ji} \approx \sum_{l \in N(i)}p_{il}h_{jl}^{old} + \frac{V_G}{d_{i}}+1
\end{equation}

As a result of equations \ref{equa16} and \ref{equa18},
\begin{displaymath}
\begin{split}
c_{ij}  &\approx 1+ \sum_{l \in N(i)}p_{il}c_{lj}^{old} + \frac{V_G}{d_{i}} +1
        \approx \sum_{l \in N(i)}p_{il}c_{lj}^{old} + \frac{V_G}{d_{i}}
\end{split}
\end{displaymath}

When $k=1$ (rank one case), the formula becomes Equation \ref{equa15}.
\end{proof}

% that's all folks
\end{document}